\documentclass{article}

     \PassOptionsToPackage{compress}{natbib}

\usepackage[preprint]{neurips_2022}

\usepackage{natbib}
\usepackage[utf8]{inputenc}
\usepackage[T1]{fontenc}
\usepackage{hyperref}
\usepackage{url}
\usepackage{booktabs}
\usepackage{array}
\usepackage{amsfonts}
\usepackage{nicefrac}
\usepackage{microtype}
\usepackage{xcolor}

\hypersetup{
  colorlinks = true,
  urlcolor   = [rgb]{0,0,1},
  linkcolor  = [rgb]{0,0,0.5},
  citecolor  = [rgb]{0,0,0.5}
}

\usepackage{amsthm}
\usepackage{amsmath}
\usepackage{amssymb}
\usepackage{mathrsfs}
\usepackage{graphicx}

\usepackage{zref-clever}

\let\cref\zcref
\let\Cref\zcref

\zcsetup{
  cap        = true,   
  abbrev     = false   
}

\AddToHook{env/theorem/begin}{\zcsetup{countertype={theorem=theorem}}}
\AddToHook{env/proposition/begin}{\zcsetup{countertype={theorem=proposition}}}
\AddToHook{env/lemma/begin}{\zcsetup{countertype={theorem=lemma}}}
\AddToHook{env/corollary/begin}{\zcsetup{countertype={theorem=corollary}}}
\AddToHook{env/question/begin}{\zcsetup{countertype={theorem=remark}}} 
\AddToHook{env/hyp/begin}{\zcsetup{countertype={theorem=hypothesis}}}
\AddToHook{env/definition/begin}{\zcsetup{countertype={theorem=definition}}}
\AddToHook{env/remark/begin}{\zcsetup{countertype={theorem=remark}}}
\AddToHook{env/example/begin}{\zcsetup{countertype={theorem=example}}}
\AddToHook{env/concl/begin}{\zcsetup{countertype={theorem=result}}}      
\AddToHook{env/note/begin}{\zcsetup{countertype={theorem=note}}}
\AddToHook{env/conj/begin}{\zcsetup{countertype={theorem=proposition}}}  
\AddToHook{env/assumption/begin}{\zcsetup{countertype={theorem=theorem}}} 

\usepackage{comment}
\usepackage{algorithm}
\usepackage{algpseudocode}

\theoremstyle{plain}
\newtheorem{theorem}{Theorem}[section]
\newtheorem{proposition}[theorem]{Proposition}
\newtheorem{lemma}[theorem]{Lemma}
\newtheorem{corollary}[theorem]{Corollary}

\theoremstyle{definition}
\newtheorem{definition}[theorem]{Definition}

\newtheorem{example}[theorem]{Example}

\numberwithin{equation}{section}

\renewcommand{\paragraph}[1]{\noindent{\bf #1}}
\usepackage{wrapfig}
\usepackage{caption}
\usepackage{subcaption}
\usepackage{placeins}

\usepackage{bm} 

\newcommand{\R}{\mathbb{R}}

\DeclareMathOperator{\rank}{\mathrm{rank}}

\newcommand{\qvec}{q}
\newcommand{\kvec}{k}
\newcommand{\dimqkn}{d_{\mathrm{qk},n}}

\newcommand{\neval}{n_{\mathrm{eval}}}
\newcommand{\ntrain}{n_{\mathrm{train}}}

\author{%
  Tomohiro Hayase\\
  AIST
  \And
  Ryo Karakida\\
  AIST
}

\begin{document}
\title{A Unified Framework for Critical Scaling \\ of Inverse Temperature in Self-Attention}
\maketitle

\newcommand{\avg}{\mu}
\newcommand{\stnorm}{\mathcal{N}(0,1)}
\newcommand{\numsample}{n_s}

\newcommand{\invavg}{D^{-1}}
\newcommand{\trunc}{K}

\newcommand{\sqdA}{\sqrt{d}A}
\newcommand{\Ylln}{Y}
\newcommand{\Ycntr}{Y^f}
\newcommand{\Ypoly}{Y^Q}
\newcommand{\Ypolylin}{Y_\mathrm{lin}^Q}

\newcommand{\Ylin}{Y^f_\mathrm{lin}}

\newcommand{\smat}{S}
\newcommand{\ptn}{Z}
\newcommand{\ntoken}{\ell}
\newcommand{\dimqk}{d_\mathrm{qk}}
\renewcommand{\Pr}{\mathbb{P}}

\begin{abstract}
Length-dependent logit rescaling is widely used to stabilize long-context self-attention,
but existing analyses and methods suggest conflicting inverse-temperature laws for the
context length $n$, ranging from $(\log n)^{1/2}$ to $\log n$ and $(\log n)^2$.
We provide a general theory showing that the desirable scale is determined by the
gap-counting function $N_n$ of each attention row. Counting how many competitors
lie within each gap from the maximum, we define an upper-tail accumulation scale
and prove that it gives the critical inverse-temperature scale for softmax
concentration: below this scale, the top competitors remain unseparated,
whereas above it, the attention entropy collapses. This framework unifies prior
scaling laws as different $N_n$ and yields a direct diagnostic for attention-score
families, from idealized theoretical models to more practical transformers.
\end{abstract}

\section{Introduction}

Self-attention \citep{vaswani2017attention} is a central component of
modern deep learning systems. 
To understand the mechanism of self-attention, theoretical studies have been actively developed. 
One standard approaches in theoretical analysis is to take large-degree-of-freedom limits, such as the limits of large embedding dimension and large number of heads \citep{hron2020infinite,bordelon2024infinite,sakai2025inf}. In particular, motivated by the practical importance of inference with long contexts, recent work has begun to explore the infinite-context-length limit. For example, long contexts can induce token clustering as metastable states~\citep{geshkovski2024mathematical,bruno2025multiscale},
as well as rank collapse \citep{saada2025mind}. Moreover, increasing the inverse temperature with the context length changes the phase from rank collapse to entropy collapse, with a critical scaling separating the two regimes \citep{giorlandino2025two,chen2025critical}.

Despite this progress, our understanding of the desirable scaling for long contexts is limited.
The  prior theory has identified critical inverse-temperature scalings only in restricted
stochastic and deterministic settings, such as Gaussian-logit (i.e., random-energy-style) analyses and equicorrelated self-attention models \citep{giorlandino2025two,chen2025critical}. In practice, inverse temperatures that increase with the context length are also widely used to avoid the instability of inference for long contexts \citep{bai2023qwen,nakanishi2025scalable,peng2023yarn}. As reviewed in
\cref{subsec:related-work}, these mechanisms can be placed on a common
logarithmic-exponent scale, but the resulting exponents are not
consistent across models and arguments. This suggests that the exponent
is not determined by context length alone. What is missing is a
principle that identifies, from the attention scores themselves, the
scale at which softmax changes regime.

This paper develops such a principle through a deterministic
gap-counting framework. Instead of organizing an attention row by
ordinal rank, we count how many competitors lie within each score gap
from the maximum. The exponential growth rate of this counting curve
defines a row-wise critical scale, the upper-tail accumulation scale
(\cref{def:upper-crowding}). Our main theorem shows that this scale
controls the inverse-temperature scale separating the diffuse and
concentrated softmax regimes for arbitrary deterministic score
families, without assuming Gaussian logits, independent scores, random
initialization, or a special token construction.

The main contribution is a unified gap-counting framework: the
cumulative function $N_n(t)$ explains both idealized theoretical
models and actual transformers, and yields a calibration principle
for the inverse-temperature exponent. The upper-tail accumulation
scale $\Lambda_n$ keeps softmax away from both non-separation of the
top competitors and entropy collapse, so any non-degenerate
logarithmic schedule has $\xi_\beta=\xi_\Lambda$
(\cref{cor:exponent-selection}); existing temperature rules become
different $N_n$ and hence different $\xi_\Lambda$, with the legacy
$\beta_n\propto\log n$ as the $\xi_\Lambda=1$ slice. In two
complementary settings (inference-time $\beta$-sweep at extrapolation
lengths and training-time learning of an inverse-temperature
vector), $\hat\xi_\Lambda$ read directly from attention scores aligns
with the empirical optimum $\xi_\beta^*$.
\Cref{sec:xi-sweep} presents the motivating sweeps, and
\cref{sec:theory} develops the deterministic theory and applies its
empirical readings to concrete self-attention scores.

\newcommand{\PP}{\mathbb P}
\newcommand{\EE}{\mathbb E}
\newcommand{\one}{\mathbf 1}

\section{Preliminaries}
\label{sec:preliminaries}

\subsection{Notation}
\label{sec:notation}

Let the context length be $n$.
For $n\ge 2$, let $z_{n,1},\dots,z_{n,n}\in\mathbb R$. The softmax
probability of token $j$ at inverse temperature $\beta>0$ is
$p_{n,j}(\beta):=e^{\beta z_{n,j}}/\sum_{\ell=1}^n e^{\beta z_{n,\ell}}$.
The concrete instance we have in mind is a fixed-query self-attention row of
the standard scaled dot-product form \citep{vaswani2017attention},
\begin{equation}\label{eq:sdp-attention}
 z_{n,j}
 =\frac{\langle W_n^Q x_{n,i_n},\,W_n^K x_{n,j}\rangle}{\sqrt{\dimqkn}}
 =\frac{\langle \qvec_{n,i_n},\kvec_{n,j}\rangle}{\sqrt{\dimqkn}},
\end{equation}
with token embeddings $x_{n,j}\in\mathbb R^{d_n}$, weight matrices
$W_n^Q,W_n^K\in\mathbb R^{\dimqkn\times d_n}$, query/key vectors
$\qvec_{n,i}:=W_n^Q x_{n,i}$, $\kvec_{n,j}:=W_n^K x_{n,j}$, and a fixed
query index $i_n$. The theory below uses only the resulting score sequence
$(z_{n,j})_j$, so the embedding-weight factorization is absorbed into
$\qvec,\kvec$ from \cref{sec:theory} onward. For positive sequences, $f_n\sim g_n$ means
$f_n/g_n\to 1$ and $f_n\asymp g_n$ means $c_1 g_n\le f_n\le c_2 g_n$
for some constants $0<c_1\le c_2$ and all sufficiently large $n$. We
study the asymptotic regime $n\to\infty$ with $\beta=\beta_n$, and
parametrise scaling laws by the exponent $\xi_\beta\ge 0$ in
$\beta_n\asymp(\log n)^{\xi_\beta}$.

\subsection{Related work}
\label{subsec:related-work}
\label{subsec:prior-comparison}

\begin{wraptable}{r}{0.46\linewidth}
\vspace{-\baselineskip}
\centering
\small
\setlength{\tabcolsep}{4pt}
\renewcommand{\arraystretch}{1.05}
\begin{tabular}{@{}c >{\raggedright\arraybackslash}p{0.34\textwidth}@{}}
\toprule
$\xi_\beta$ & Mechanism (prior work) \\
\midrule
$1/2$ & Gaussian-logit analyses \citep{giorlandino2025two,anson2025scaleinvariant} \\
$1$   & SSMax \citep{nakanishi2025scalable}, Qwen LogN \citep{bai2023qwen}; equicorrelated phase transition \citep{chen2025critical} \\
$2$   & YaRN  \citep{peng2023yarn} \\
\bottomrule
\end{tabular}
\caption{Prior work grouped by the implicit exponent $\xi_\beta$ in $\beta_n\asymp(\log n)^{\xi_\beta}$.}
\label{tab:priorwork}
\end{wraptable}

Prior choices for the softmax inverse temperature in self-attention each
fix $\xi_\beta$ a priori (\cref{tab:priorwork}).
\emph{Gaussian-logit analyses ($\xi_\beta=1/2$).} Drawing an analogy with the
Random Energy Model \citep{derrida1981random}, \citet{giorlandino2025two}
conjecture $\beta_n\asymp\sqrt{\log n}$ as the scaling that balances rank
and entropy collapse. Under a marginally Gaussian attention-logit
assumption, \citet{anson2025scaleinvariant} derive a position-dependent
multiplicative factor $a_t\asymp\sqrt{\log t}$ that yields scale-invariant
total attention.
\emph{Explicit logarithmic temperature ($\xi_\beta=1$).} SSMax
\citep{nakanishi2025scalable} multiplies attention logits by $s\log n$
with $s$ learned per head, and Qwen \citep{bai2023qwen} adopts a
length-dependent LogN multiplier of order $\log n/\log \ntrain$ outside the
training window $\ntrain$. \citet{chen2025critical} prove a phase transition
at $\beta_n\asymp\log n$ in a deterministic equicorrelated (``simplex'')
configuration.
\emph{Doubled rescaling ($\xi_\beta=2$).} YaRN \citep{peng2023yarn} multiplies
both the query and key sides of the rotary embedding by a common
$\log n$-dependent multiplier, doubling the exponent
(\cref{sec:yarn-derivation}).
The values $\xi_\beta\in\{1/2,\,1,\,2\}$ are the only ones with widely cited
mechanisms; intermediate values appear naturally once $\xi_\beta$ is treated as
a continuous parameter, as in our $\xi_\beta$-sweep (\cref{sec:xi-sweep}) and
our model-free gap-counting theory (\cref{sec:theory}).

\section{A minimal empirical check: $\xi_\beta=1$ is not universal}
\label{sec:xi-sweep}

Prior mechanisms in \cref{tab:priorwork} fix the logarithmic
inverse-temperature exponent $\xi_\beta$ a priori. We use the
single-parameter ansatz $\beta_n=c(\log n)^{\xi_\beta}$ only as a probe:
the goal is to check whether the logarithmic choice $\xi_\beta=1$ is
empirically forced.

\cref{fig:body-four-panel} gives two complementary checks.
Three context-length quantities appear repeatedly below: the per-token
context length $n$ of \cref{sec:notation}, the \emph{evaluation context
length} $\neval$ (the inference sliding-window size, fixed per
evaluation, with $n\le\neval$), and the \emph{training context length}
$\ntrain$ (the model's training window).
\label{subsubsec:sweep-qwen}%
First, we evaluate the pretrained Qwen-7B-Chat at extrapolation
lengths $\neval>\ntrain$ ($\ntrain=8192$ for Qwen-7B-Chat),
sweeping the inverse-temperature exponent $\xi_\beta$ in the
length-dependent multiplier $\beta_n=c(\log n)^{\xi_\beta}$, applied per
token, while dynamic-NTK RoPE (\cref{para:dynamic-ntk-rope}) is held
fixed (the RoPE base is rescaled per $\neval$). The perplexity sweep, scored on the back half of
each sliding window so that every scored token is conditioned on at
least $n/2$ preceding tokens (formal definition in
\cref{subsec:setup-qwen}), is reported per $\neval$. On both PG-19
(PG19) \citep{rae2020compressive} and Proof-Pile-2 (PP2)
\citep{azerbayev2024llemma}, $\xi_\beta=1$ is not a local minimum: the
perplexity minima on the tested grid occur around
$\xi_\beta^*\in[2,2.5]$, near the $\xi_\beta=2$ regime of YaRN
\citep{peng2023yarn}. Qwen's default $\log n$ scaling is therefore
not optimal on the tested datasets and extrapolation lengths.

\label{subsubsec:sweep-nanogpt}%
Second, the SSMax design of \citet{nakanishi2025scalable} was motivated
by a probe experiment in which a learnable per-context-length
inverse-temperature vector trained from scratch was fit well by
$a_1\log n + a_2$. We replay this probe on GPT-124M
\citep{radford2019language} trained from scratch with the nanoGPT
codebase \citep{karpathy2022nanogpt}, with a length-$\neval=1024$
learnable vector $\beta_n$ (one parameter per context position; protocol
in \cref{subsec:nanogpt-pn-trajectory}) but
generalize the fit to $\beta_n=a_1(\log n)^{\xi_\beta}$ and sweep $\xi_\beta$. On both
SlimPajama (SP) \citep{soboleva2023slimpajama} and
OpenWebText (OWT) \citep{gokaslan2019openwebtext}, across the reported training snapshots, the
least-squares MSE for $n\ge 64$ has its minimum at
$\xi_\beta^*\in\{0.40, 0.50\}$ across SlimPajama and OpenWebText; the
SSMax choice $\xi_\beta=1$ is therefore not optimal even on its own
motivating experiment.

\begin{figure}[t]
\centering
\captionsetup[subfigure]{labelformat=parens,labelfont=small,textfont=small,
                         skip=1pt,position=bottom,aboveskip=0pt,belowskip=0pt}
\begin{subfigure}[t]{0.245\linewidth}\centering
  \includegraphics[width=\linewidth]{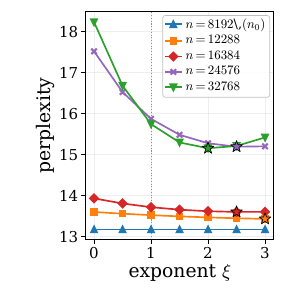}
  \caption{PG-19}\label{fig:ppl-xi-7b-pg19}
\end{subfigure}\hfill
\begin{subfigure}[t]{0.245\linewidth}\centering
  \includegraphics[width=\linewidth]{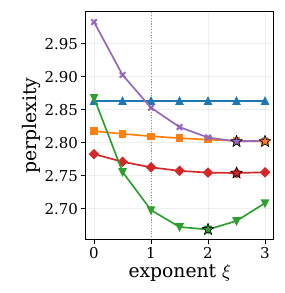}
  \caption{Proof-Pile-2}\label{fig:ppl-xi-7b-pp}
\end{subfigure}\hfill
\begin{subfigure}[t]{0.245\linewidth}\centering
  \includegraphics[width=\linewidth]{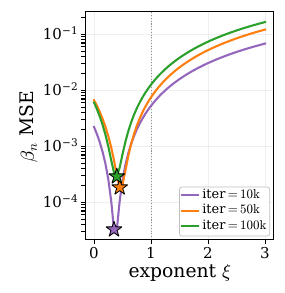}
  \caption{SlimPajama}\label{fig:nanogpt-pn-sp}
\end{subfigure}\hfill
\begin{subfigure}[t]{0.245\linewidth}\centering
  \includegraphics[width=\linewidth]{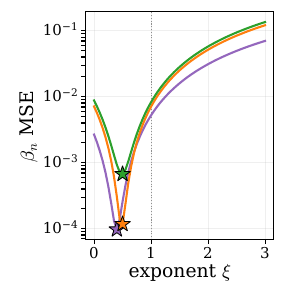}
  \caption{OpenWebText}\label{fig:nanogpt-pn-owt}
\end{subfigure}
\vspace{-4pt}
\caption{Sweeping $\xi_\beta$ shows that optima need not occur at $\xi_\beta=1$. (a, b)
Inference-time perplexity (back-half sliding window; \cref{subsec:setup-qwen}) is minimized
around $\xi_\beta^*\in[2,2.5]$.
(c, d) The bias-free least-squares fit of the learned $\beta_n$ to
$a_1(\log n)^{\xi_\beta}$ ($n\ge 64$) has its MSE minimum below $\xi_\beta=1$ on
both datasets. The dotted line marks $\xi_\beta=1$.
Doc-level SE on PPL is at most $10\%$ of the cell mean in (a, b);
residual-bootstrap half-IQR on MSE is at most $8\%$ of the
point-estimate MSE in (c, d).}
\label{fig:body-four-panel}
\end{figure}

Both sweeps therefore give concrete cases in which $\xi_\beta\neq 1$ is
preferred (full protocols, grids, companion sweeps, and per-context
numerics in \cref{sec:xi-sweep-numerics}). They select different
values of $\xi_\beta$, because perplexity and the learned $\beta_n$ are
different spatially-averaged observables of the same representation.
\Cref{sec:theory} characterizes the general cause of softmax collapse:
$\xi_\beta$ is fixed by the row-wise $N_n$ of attention scores.

\section{Deterministic theory}
\label{sec:theory}

The deterministic theory of softmax collapse is most naturally phrased in
terms of how many competitors lie within a given gap of the winner, not in
terms of which ordinal rank a competitor occupies: a one-step rank change
can correspond to a negligible score change on one row and a large one on
another, whereas increasing the gap by $\delta$ always suppresses a
competitor's weight by the same factor $e^{-\beta\delta}$. Grouping the softmax
partition function by gap rewrites it as a transform of a cumulative
gap-counting curve, and the inverse temperature at which softmax
collapses is then read off the exponential growth rate of that curve, namely the
\emph{upper-tail accumulation scale} $\Lambda_n$ of \cref{subsec:contact}.
\Cref{subsec:collapse} first fixes the collapse observables against
which $\Lambda_n$ is identified as the relevant scale.

\subsection{Collapse observables and the critical scale}
\label{subsec:collapse}

\begin{definition}[Collapse observables]\label{def:collapse}
Adopt the softmax probabilities $p_{n,j}(\beta)$ from \cref{sec:notation}.
The \emph{Shannon entropy} is $H_n(\beta):=-\sum_{j=1}^n p_{n,j}(\beta)\log p_{n,j}(\beta)$,
and the \emph{top-two weight gap} $D_n(\beta)$ is the difference
between the largest and second-largest entries of
$(p_{n,j}(\beta))_{j=1}^n$. Along a positive deterministic sequence
$(\beta_n)$, we say \emph{top-two collapse} holds if
$D_n(\beta_n)\to 0$, \emph{entropy collapse} if $H_n(\beta_n)\to 0$,
and \emph{rank collapse} if
$\max_{i,j}\bigl|p_{n,i}(\beta_n)-p_{n,j}(\beta_n)\bigr|\to 0$.
\end{definition}

\begin{definition}[Critical scale]\label{def:coarse-critical-scale}
A positive deterministic sequence $(X_n)$ is a
\emph{rank--entropy scale} (\emph{RE scale}) if, for every positive
deterministic sequence $(\beta_n)$,
\[
 \beta_n/X_n\to 0 \;\Longrightarrow\; \text{rank collapse},
 \qquad
 \beta_n/X_n\to\infty \;\Longrightarrow\; \text{entropy collapse},
\]
in the sense of \cref{def:collapse}, and a
\emph{top-two--entropy scale} (\emph{TE scale}) if the subcritical
conclusion is replaced by top-two collapse. Each is unique up to
$\asymp$ when it exists.
\end{definition}

We refer to TE and RE scales collectively as \emph{critical scales}:
each is a deterministic sequence that separates a subcritical regime,
in which a collapse observable vanishes, from a supercritical regime in
which entropy collapses. Rank collapse is the standard
transformer-literature notion of convergence of an attention row toward
the uniform law $(1/n,\dots,1/n)$
\citep{dong2021attention,noci2022signal,geshkovski2024mathematical},
strictly stronger than top-two collapse.

The primary scale here is $\Lambda_n$ of \cref{subsec:contact}, which
\cref{thm:deterministic-} identifies as a critical scale
(top-two/entropy sense of \cref{def:coarse-critical-scale}) and which
inverse-temperature calibration requires
(\cref{cor:exponent-selection}). \Cref{cor:rank-boundary-diffuse}
additionally upgrades $\Lambda_n$ to the standard rank-collapse
boundary under $C_n:=\Lambda_n\Delta_n=\alpha_n\log n\to\infty$
(\cref{app:rank-boundary} for the contrary case), an
observable-specific refinement that does not change the calibration
principle.

\subsection{Counting function, upper-tail accumulation scale, and contact point}
\label{subsec:contact}

\begin{definition}[Cumulative gap-counting function]\label{def:gap-counting}
Set $z_n^*:=\max_{1\le j\le n}z_{n,j}$. The \emph{cumulative gap-counting function}
\[
 N_n(t):=\#\{j \mid z_n^* - z_{n,j} \le t\},\qquad t\ge 0,
\]
counts the indices $j$ with $z_n^*-z_{n,j}\le t$; such an index $j$
is called a \emph{$t$-competitor} (the prefix $t$ is dropped when
unambiguous). In the physics analogy with $-z_{n,j}$ as energies, the
gap $z_n^*-z_{n,j}$ is the excitation above the ground state $z=z_n^*$,
and $N_n(t)$ counts states with excitation energy at most $t$.
\end{definition}

The softmax probabilities of \cref{sec:notation} factor through the normalised partition function
 $Z_n(\beta):=\sum_{j=1}^n e^{-\beta(z_n^*-z_{n,j})}$,
$ p_{n,j}(\beta)=\exp[{-\beta(z_n^*-z_{n,j})}]/Z_n(\beta),$
and integration by parts (with $N_n(0^-)=0$ and $e^{-\beta t}N_n(t)\to 0$ as $t\to\infty$) gives
\begin{equation}\label{eq:Z-by-parts}
 Z_n(\beta)=\beta\int_0^\infty e^{-\beta t}\,N_n(t)\,dt.
\end{equation}
The integrand of \eqref{eq:Z-by-parts} is $e^{-\beta t}N_n(t)$. Whether the exponential damping $e^{-\beta t}$ outpaces the growth of $N_n(t)$ is captured by the envelope $N_n(t)\le e^{\beta t}$, and the smallest exponent for which this envelope holds is the next definition.

\begin{definition}[Upper-tail accumulation scale]\label{def:upper-crowding}
The \emph{upper-tail accumulation scale} $\Lambda_n$ is the smallest exponent for which the envelope
\begin{equation}\label{eq:Nn-exp-envelope}
 N_n(t)\le e^{\Lambda_n t}\qquad(t>0)
\end{equation}
holds, equivalently
\begin{equation}\label{eq:Lambda-def}
 \Lambda_n:=\sup_{t>0}\frac{\log N_n(t)}{t}.
\end{equation}
\end{definition}

Throughout the remainder of this section, we assume $\Lambda_n<+\infty$ eventually in $n$; the boundary case $\Lambda_n=\infty$ (ties at the maximum, which forces $\log N_n(t)/t\to\infty$ as $t\downarrow 0$) is deferred to \cref{cor:ties}. Under this finite-$\Lambda_n$ assumption the map $t\mapsto\log N_n(t)/t$ is non-increasing between consecutive competitor gaps and the supremum is attained as a maximum at some positive competitor gap $z_n^*-z_{n,j}>0$ (\cref{lem:discreteGamma}).

\begin{definition}[Contact point]\label{def:contact}
Call a positive competitor gap $u=z_n^*-z_{n,j}>0$ a \emph{contact gap}
if $\log N_n(u)/u=\Lambda_n$ (the envelope $N\le e^{\Lambda_n t}$ is
attained at $t=u$). Let $\Delta_n$ be the largest such gap, and define
the \emph{contact accumulation exponent}
$\alpha_n:=\log N_n(\Delta_n)/\log n$. The pair $(\alpha_n,\Delta_n)$
is the \emph{(largest) contact point} and satisfies the \emph{contact
formula}
\begin{equation}\label{eq:contact-sandwich}
 \Lambda_n=\frac{\alpha_n\log n}{\Delta_n}.
\end{equation}
\end{definition}

\par
\begin{wrapfigure}[16]{R}{0.42\linewidth}
\centering
\includegraphics[width=\linewidth]{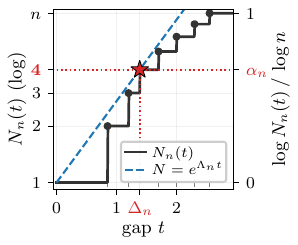}
\caption{$N_n(t)$ and envelope $N=e^{\Lambda_n t}$; contact point (red star). Sorted-rank version: \cref{fig:lambda-schematic-full}.}
\label{fig:lambda-schematic}
\end{wrapfigure}
\noindent
\Cref{fig:lambda-schematic} shows the gap-counting curve, the
exponential envelope, and the contact point. The coordinate
$\alpha_n$ records the finite-$n$ contact count on the scale of
$n$, while $\Delta_n$ records the corresponding gap. The contact formula \eqref{eq:contact-sandwich} is the only relation between
these coordinates used below.

\begin{theorem}\label{thm:deterministic-}
$(\Lambda_n)$ is the TE scale.
\end{theorem}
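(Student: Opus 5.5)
The plan is to verify the two implications of \cref{def:coarse-critical-scale} directly from the envelope \eqref{eq:Nn-exp-envelope} and the contact formula \eqref{eq:contact-sandwich}, with $\Lambda_n<\infty$ as assumed. Uniqueness up to $\asymp$ is already part of the definition. Write $\varepsilon_n:=\beta_n/\Lambda_n$. Let $g_n>0$ be the smallest positive gap $z_n^*-z_{n,j}$; it is well defined because finite $\Lambda_n$ excludes ties at the maximum, so the maximiser is unique. Set $L_n:=\log N_n(\Delta_n)=\Lambda_n\Delta_n$. Two elementary facts are used throughout:
\begin{itemize}
\item $N_n(g_n)\ge 2$, so $\Lambda_n\ge \log 2/g_n$;
\item $\Delta_n$ is itself a positive gap, so $\Delta_n\ge g_n$.
\end{itemize}

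\emph{Subcritical side ($\varepsilon_n\to 0$ gives top-two collapse).} Because the maximiser is unique, the two largest weights are $1/Z_n$ and $e^{-\beta_n g_n}/Z_n$. Hence
\[
D_n=\frac{1-e^{-\beta_n g_n}}{Z_n}\le \frac{\beta_n g_n}{Z_n}.
\]
For the denominator, keep only the $N_n(\Delta_n)$ terms with gap at most $\Delta_n$:
\[
Z_n\ge N_n(\Delta_n)e^{-\beta_n\Delta_n}=e^{L_n(1-\varepsilon_n)}.
\]
For the numerator, $\Delta_n\ge g_n$ gives $\beta_n g_n\le \varepsilon_n\Lambda_n\Delta_n=\varepsilon_n L_n$. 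Combining,
\[
D_n\le \varepsilon_n L_n e^{-L_n(1-\varepsilon_n)}\le \varepsilon_n \sup_{L\ge\log 2}Le^{-L/2}
\]
once $\varepsilon_n\le 1/2$. The right-hand side tends to $0$. The contact point matters here: $L_n$ may stay bounded or diverge, and the factor $e^{-L_n}$ absorbs the factor $L_n$ either way.

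\emph{Supercritical side ($\varepsilon_n\to\infty$ gives entropy collapse).} First I show that the non-top mass vanishes. Since $N_n(t)=1$ for $t<g_n$ and $N_n(t)\le e^{\Lambda_n t}$ otherwise, \eqref{eq:Z-by-parts} gives
\[
Z_n(\beta)-1=\beta\int_{g_n}^\infty e^{-\beta t}\bigl(N_n(t)-1\bigr)\,dt\le \frac{\beta}{\beta-\Lambda_n}\,e^{-(\beta-\Lambda_n)g_n}.
\]
With $g_n\ge\log2/\Lambda_n$ the right-hand side is at most $\frac{\beta}{\beta-\Lambda_n}2^{-(\beta/\Lambda_n-1)}$, which tends to $0$ whenever $\beta/\Lambda_n\to\infty$.

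For the entropy I use $H_n(\beta)=\log Z_n(\beta)+\beta\sum_j (z_n^*-z_{n,j})p_{n,j}(\beta)$. The first term is at most $Z_n-1\to0$. In the second term, apply $x e^{-x}\le 2e^{-x/2}$ with $x=\beta(z_n^*-z_{n,j})$ to every non-top term. This bounds the second term by $2\bigl(Z_n(\beta_n/2)-1\bigr)$, using $Z_n(\beta_n)\ge1$. That quantity tends to $0$ by the previous estimate, since $\beta_n/(2\Lambda_n)\to\infty$ as well. Hence $H_n(\beta_n)\to 0$.

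I expect the subcritical step to be the main obstacle. A naive bound using only $g_n$ fails, because $\Lambda_n g_n$ can be large. The fix is to compare against the contact gap $\Delta_n\ge g_n$, so that the same quantity $L_n$ appears both in the bound on $\beta_n g_n$ and in the lower bound on $Z_n$.
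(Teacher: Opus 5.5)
Your proof is correct and follows the paper's structure: the subcritical step is the same contact-point argument ($Z_n(\beta_n)\ge e^{(1-s_n)\alpha_n\log n}$, $1-e^{-x}\le x$, smallest gap $\le\Delta_n$, and $\sup_{L>0}Le^{-L/2}=2/e$).

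In the supercritical step you differ only in bookkeeping. You bound $Z_n-1$ beyond the smallest gap $g_n$, which gives exponential decay in $\beta_n/\Lambda_n$ via $\Lambda_n g_n\ge\log 2$. You then control the energy term through $xe^{-x}\le 2e^{-x/2}$ and $Z_n(\beta_n/2)-1$. The paper instead uses $Z_n\le\beta_n/(\beta_n-\Lambda_n)$ and differentiates \eqref{eq:Z-by-parts} to get $0\le-\beta Z_n'\le(\beta/(\beta-\Lambda_n))^2-1$. Both routes are valid.
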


\begin{proof}
Set $s_n:=\beta_n/\Lambda_n$.

\textit{Subcritical direction.}
Let $\eta_n>0$ be the gap to the closest competitor, then $D_n(\beta)=(1-e^{-\beta\eta_n})/Z_n(\beta)$.
Suppose $s_n\to 0$. By \eqref{eq:contact-sandwich}, $N_n(\Delta_n)=n^{\alpha_n}$ and $\beta_n\Delta_n=s_n\alpha_n\log n$, so monotonicity of $e^{-\beta_n t}$ on the $N_n(\Delta_n)$ tokens with $z_n^*-z_{n,j}\le\Delta_n$ gives
\[
 Z_n(\beta_n)
 \ge N_n(\Delta_n)\,e^{-\beta_n\Delta_n}
 = n^{\alpha_n(1-s_n)}
 \ge n^{\alpha_n/2}
\]
for all large $n$ since $s_n\le 1/2$ eventually. Using $1-e^{-x}\le x$ and $\eta_n\le\Delta_n$, it holds that $D_n(\beta_n)\le\beta_n\eta_n/Z_n(\beta_n)\le\beta_n\Delta_n/n^{\alpha_n/2}=s_n\alpha_n\log n/n^{\alpha_n/2}\le(2/e)\,s_n\to 0$, where the last step uses $\sup_{a>0}a\log n/n^{a/2}=2/e$.

\textit{Supercritical direction.} $H_n(\beta_n) = \log Z_n(\beta_n)-\beta_n Z_n'(\beta_n)/{Z_n(\beta_n)}$. Suppose $s_n\to\infty$. The winner ($z_{n,j^*}=z_n^*$) contributes $1$, so $Z_n(\beta_n)\ge 1$. The envelope \eqref{eq:Nn-exp-envelope} and the integral form \eqref{eq:Z-by-parts} give
\begin{equation}\label{eq:Z-upper-supercritical}
 Z_n(\beta_n)\le\beta_n\int_0^\infty e^{-(\beta_n-\Lambda_n)t}\,dt=\frac{\beta_n}{\beta_n-\Lambda_n},
\end{equation}
which $\to 1$, so $Z_n(\beta_n)\to 1$. Now, differentiating \eqref{eq:Z-by-parts} gives \(-\beta Z_n'(\beta)=\beta^2\int_0^\infty t e^{-\beta t}N_n(t)\,dt-Z_n(\beta)\). Since \(0\le-\beta Z_n'(\beta)\), \(N_n(t)\le e^{\Lambda_n t}\), and \(Z_n(\beta)\ge1\), we have \(0\le-\beta Z_n'(\beta)\le(\beta/(\beta-\Lambda_n))^2-1\) for \(\beta>\Lambda_n\). At \(\beta=\beta_n\) this is \(o(1)\), so \(H_n(\beta_n)\to0\) follows from \(Z_n(\beta_n)\to1\).
\end{proof}

\begin{corollary}\label{cor:rank-boundary-diffuse}
$(\Lambda_n)$ is the RE scale if the contact-count entropy
$C_n:=\Lambda_n\Delta_n=\alpha_n\log n$ diverges.
\end{corollary}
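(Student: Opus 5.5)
The plan is to reuse \cref{thm:deterministic-} for the supercritical half and to strengthen only the subcritical half, via a lower bound on the normalised partition function $Z_n(\beta_n)$ that already appears in that proof.

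\textbf{Supercritical direction.} This is unchanged. If $\beta_n/\Lambda_n\to\infty$, then \cref{thm:deterministic-} already gives $H_n(\beta_n)\to 0$, which is exactly the supercritical requirement in \cref{def:coarse-critical-scale} for an RE scale. So only the subcritical conclusion has to be upgraded, from top-two collapse to rank collapse.

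\textbf{Subcritical direction.} Assume $s_n:=\beta_n/\Lambda_n\to 0$.

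First I reduce rank collapse to growth of $Z_n$. All probabilities are nonnegative, and the largest one belongs to the winner, with $p_{n,j^*}(\beta)=1/Z_n(\beta)$. Hence
\[
 \max_{i,j}\bigl|p_{n,i}(\beta_n)-p_{n,j}(\beta_n)\bigr|\le\max_j p_{n,j}(\beta_n)=\frac{1}{Z_n(\beta_n)},
\]
and it suffices to show $Z_n(\beta_n)\to\infty$.

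Next I bound $Z_n$ from below as in the subcritical step of \cref{thm:deterministic-}. Keep only the $N_n(\Delta_n)$ tokens with gap at most $\Delta_n$; each contributes at least $e^{-\beta_n\Delta_n}$. The contact formula \eqref{eq:contact-sandwich} gives $\beta_n\Delta_n=s_n\Lambda_n\Delta_n=s_nC_n$ and $\log N_n(\Delta_n)=\alpha_n\log n=C_n$. Therefore
\[
 Z_n(\beta_n)\ge N_n(\Delta_n)\,e^{-\beta_n\Delta_n}=e^{C_n(1-s_n)}\ge e^{C_n/2}
\]
once $s_n\le 1/2$. Under the hypothesis $C_n\to\infty$ the right-hand side diverges, so $Z_n(\beta_n)\to\infty$ and rank collapse follows.

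\textbf{Where the hypothesis matters.} There is no real computational obstacle. The point to watch is that the top-two argument in \cref{thm:deterministic-} only needed the \emph{ratio} $s_n\alpha_n\log n/n^{\alpha_n/2}$ to vanish, which holds even when $C_n$ stays bounded. Rank collapse instead needs $Z_n(\beta_n)$ itself to blow up. The lower bound $e^{C_n(1-s_n)}$ shows this is guaranteed precisely when $C_n\to\infty$, which is why that is the hypothesis of the corollary; the bounded-$C_n$ case is treated separately in \cref{app:rank-boundary}.

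Finally, uniqueness up to $\asymp$ is already asserted in \cref{def:coarse-critical-scale}. So $(\Lambda_n)$ is the RE scale.
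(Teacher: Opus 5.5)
Your proposal is correct and follows the paper's proof essentially step for step. The supercritical half is inherited from \cref{thm:deterministic-}. The subcritical half combines the bound $\max_{i,j}|p_{n,i}(\beta_n)-p_{n,j}(\beta_n)|\le\max_j p_{n,j}(\beta_n)=1/Z_n(\beta_n)$, which is the easy direction of \cref{prop:exact-rank-criterion}, with the contact-gap lower bound $Z_n(\beta_n)\ge N_n(\Delta_n)e^{-\beta_n\Delta_n}=e^{(1-s_n)C_n}\to\infty$, which is exactly \cref{cor:diffuse-contact-rank}.
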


A diverging contact-count entropy is what lets the rank-collapse
argument absorb the lower-order tail of $N_n$ and upgrade the TE
scale to the RE scale (proof and the rank-resolved boundary in the
contrary case in \cref{app:rank-boundary}).

\subsection{Logarithmic critical-scaling exponent}
\label{subsec:logarithmic-class}

Having established that $\Lambda_n$ is a critical scale, we now record how  its logarithmic growth rate is read from the contact coordinates of \cref{def:contact}. 

For any positive sequence $(X_n)$, if $X_n\asymp(\log n)^{\xi}$, then the exponent is unique when it exists: if $\xi$ and $\xi'$ both work, then
$(\log n)^{\xi-\xi'}$ is bounded above and below, forcing $\xi=\xi'$.
We call this $\xi$ the \emph{logarithmic growth exponent} of $(X_n)$
and denote it $\xi_X$, so that
\begin{equation}\label{eq:xi-X-def}
 X_n\asymp(\log n)^{\xi_X}.
\end{equation}
This single rule produces $\xi_\Lambda,\xi_\alpha,\xi_\Delta,\xi_\beta$
as instances of one family. The principal exponent of the system is
$\xi_\Lambda$, the growth exponent of the upper-tail accumulation
scale $\Lambda_n$.

\begin{corollary}[Exponent-selection rule]
\label{cor:exponent-selection}
Under $\Lambda_n\asymp(\log n)^{\xi_\Lambda}$ and
$\beta_n\asymp(\log n)^{\xi_\beta}$, any schedule avoiding both
top-two and entropy collapse has $\xi_\beta=\xi_\Lambda$ (otherwise
$\beta_n/\Lambda_n\to 0$ or $\to\infty$, contradicting
\cref{thm:deterministic-}).
\end{corollary}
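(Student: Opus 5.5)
The statement to prove is \cref{cor:exponent-selection}, and the plan is a contrapositive reduction to \cref{thm:deterministic-} through the uniqueness of logarithmic growth exponents from \eqref{eq:xi-X-def}. Fix a positive deterministic schedule $(\beta_n)$ with $\beta_n\asymp(\log n)^{\xi_\beta}$, and assume $\Lambda_n\asymp(\log n)^{\xi_\Lambda}$. Suppose $\xi_\beta\neq\xi_\Lambda$; we will show that the schedule produces one of the two collapses.

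The first step is to bound the ratio $s_n:=\beta_n/\Lambda_n$. Take the constants from the two $\asymp$ relations: $c_1(\log n)^{\xi_\beta}\le\beta_n\le c_2(\log n)^{\xi_\beta}$ and $c_3(\log n)^{\xi_\Lambda}\le\Lambda_n\le c_4(\log n)^{\xi_\Lambda}$, for all large $n$. Dividing gives
\[
 \frac{c_1}{c_4}(\log n)^{\xi_\beta-\xi_\Lambda}\le s_n\le \frac{c_2}{c_3}(\log n)^{\xi_\beta-\xi_\Lambda}.
\]
If $\xi_\beta<\xi_\Lambda$, the upper bound tends to $0$, so $s_n\to0$. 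If $\xi_\beta>\xi_\Lambda$, the lower bound diverges, so $s_n\to\infty$. Both conclusions hold because $\log n\to\infty$ and the exponent difference is nonzero.

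The second step invokes \cref{thm:deterministic-}. Since $(\Lambda_n)$ is a TE scale in the sense of \cref{def:coarse-critical-scale}, $s_n\to0$ forces top-two collapse, $D_n(\beta_n)\to0$. Likewise $s_n\to\infty$ forces entropy collapse, $H_n(\beta_n)\to0$. Either way the schedule fails to avoid both collapses. Contrapositively, any schedule avoiding both must have $\xi_\beta=\xi_\Lambda$.

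There is no real obstacle here. The only points needing care are the following:
\begin{itemize}
\item ``Avoiding collapse'' should be read as the negation of convergence to $0$, which is exactly what the dichotomy above contradicts.
\item The eventually-finite $\Lambda_n$ standing assumption of \cref{subsec:contact} must be in force, so that \cref{thm:deterministic-} applies.
\item $\xi_\beta$ and $\xi_\Lambda$ are well defined by the uniqueness argument preceding \eqref{eq:xi-X-def}.
\end{itemize}
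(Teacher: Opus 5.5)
Your proposal is correct and follows the paper's argument exactly: the $\asymp$ bounds force $\beta_n/\Lambda_n\to 0$ or $\to\infty$ when $\xi_\beta\neq\xi_\Lambda$, and \cref{thm:deterministic-} then gives top-two collapse or entropy collapse, respectively. The eventually-finite-$\Lambda_n$ caveat you list is automatic, since the hypothesis already gives $\Lambda_n\le c_4(\log n)^{\xi_\Lambda}<\infty$ for all large $n$.
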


The contact
formula \eqref{eq:contact-sandwich} expresses $\xi_\Lambda$ in the contact coordinates of \cref{def:contact} as a linear combination.

\begin{corollary}[Coordinate decomposition of $\xi_\Lambda$]
\label{cor:xi-decomposition}
If any two of $\Lambda_n,\alpha_n,\Delta_n$ admit logarithmic growth
exponents in the sense of \eqref{eq:xi-X-def}, then so does the third,
and they satisfy
\begin{equation}\label{eq:xi-coord-decomposition}
 \xi_\Lambda \;=\; \xi_\alpha - \xi_\Delta + 1.
\end{equation}
\end{corollary}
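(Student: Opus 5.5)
The plan is to take logarithms of the contact formula \eqref{eq:contact-sandwich} and use the fact that $\asymp$ is preserved under products and quotients of positive sequences. The contact formula gives the exact identity
\[
 \Lambda_n\,\Delta_n=\alpha_n\log n
\]
for all $n$ with $\Lambda_n<\infty$. All three quantities are positive. Indeed, $\Delta_n>0$ by \cref{def:contact}. At the contact gap we have $N_n(\Delta_n)\ge 2$, since the winner and the index $j$ attaining the gap both count. Hence $\alpha_n>0$, and therefore $\Lambda_n>0$.

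I will split into three cases according to which pair is assumed to have exponents.

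\emph{Case $\alpha,\Delta$.} Suppose $c_1(\log n)^{\xi_\alpha}\le\alpha_n\le c_2(\log n)^{\xi_\alpha}$ and $c_3(\log n)^{\xi_\Delta}\le\Delta_n\le c_4(\log n)^{\xi_\Delta}$ for large $n$. Then $\Lambda_n=\alpha_n\log n/\Delta_n$ lies between $(c_1/c_4)(\log n)^{\xi_\alpha-\xi_\Delta+1}$ and $(c_2/c_3)(\log n)^{\xi_\alpha-\xi_\Delta+1}$.

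\emph{Case $\Lambda,\Delta$.} Write $\alpha_n=\Lambda_n\Delta_n/\log n$. The same bounds give $\alpha_n\asymp(\log n)^{\xi_\Lambda+\xi_\Delta-1}$.

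\emph{Case $\Lambda,\alpha$.} Write $\Delta_n=\alpha_n\log n/\Lambda_n$. This gives $\Delta_n\asymp(\log n)^{\xi_\alpha+1-\xi_\Lambda}$.

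In each case the third sequence admits a logarithmic growth exponent. The uniqueness of exponents noted before \eqref{eq:xi-X-def} then identifies it with the displayed value. Rearranging, each case yields $\xi_\Lambda=\xi_\alpha-\xi_\Delta+1$.

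No step is a real obstacle. The only points needing care are bookkeeping. First, the standing assumption that $\Lambda_n<\infty$ eventually ensures the contact point exists (\cref{lem:discreteGamma}), so the identity holds for all large $n$. Second, the constants in $\asymp$ combine multiplicatively, which is legitimate because every quantity is strictly positive and so no bound flips sign. Third, exponents may be negative or zero; this is harmless because the definition allows any real $\xi$.
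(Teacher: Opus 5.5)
Your proposal is correct and matches the paper's proof: multiply the $\asymp$-classes in the contact formula $\Lambda_n=\alpha_n\log n/\Delta_n$, invoke uniqueness of the exponent, and solve the formula for $\alpha_n$ or $\Delta_n$ to get the other two cases. Your explicit positivity check ($N_n(\Delta_n)\ge 2$, hence $\alpha_n>0$ and $\Lambda_n>0$) is a small detail that the paper leaves implicit.
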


\begin{proof}
Multiplying $\asymp$-classes in the contact formula
\eqref{eq:contact-sandwich} gives
$
 \Lambda_n=\alpha_n\,(\log n)\,\Delta_n^{-1}
 \;\asymp\;(\log n)^{\xi_\alpha}\,(\log n)\,(\log n)^{-\xi_\Delta}
 =(\log n)^{\xi_\alpha-\xi_\Delta+1},
$
and uniqueness of the exponent yields
\eqref{eq:xi-coord-decomposition}. Solving the contact formula for
$\alpha_n$ or $\Delta_n$ covers the symmetric directions.
\end{proof}

The corollary makes $\xi_\alpha,\xi_\Delta$ first-class coordinate
exponents on equal footing with $\xi_\Lambda$, and the side of $1$ on
which $\xi_\Lambda$ lands is determined by the sign of
$\xi_\alpha-\xi_\Delta$. \cref{subsubsec:verification-coord-decomp}
uses this decomposition empirically.

\subsection{Gap-counting on the equicorrelated configuration and a block extension}
\label{subsec:gram-realizations}

The contact formula \eqref{eq:contact-sandwich} constrains the
one-row $N_n$ but not the configuration behind it. The deterministic
equicorrelated (``simplex'') configuration of
\citet{cowsik2024geometric}, whose critical $\beta_n\asymp\log n$ was
established by \citet{chen2025critical}, is the simplest configuration to
which the framework applies; gap-counting reads $\xi_\Lambda$ off it directly.
A balanced block-constant extension also tunes the contact count
$\alpha_n$. Both body constructions take the score matrix as a symmetric
Gram matrix $\Sigma_n$, equivalently the self-attention score matrix
\eqref{eq:sdp-attention} with $W_n^Q=W_n^K$
(\cref{prop:psd-gram-reduction}); this $Q=K$ restriction is standard in
theoretical analyses of attention
\citep{chen2025critical,geshkovski2023emergence,arnaboldi2025asymptotics}.
\citet{chen2025critical}, in particular, work with the equicorrelated
Gram of identity-coupled tokens, equivalent in our notation to the
$W_n^Q=W_n^K=I$ specialization of \eqref{eq:sdp-attention}. In both body constructions, $N_n(t)$ has at most two
positive jumps. We write $\delta_n$ for the minimum positive
gap and reserve $\Delta_n$ for the contact gap of
\cref{def:contact}; they coincide ($\Delta_n=\delta_n$) only in
the within-block-dominant case.

\paragraph{Equicorrelated (simplex) configuration.}
With common norm $\|x_i\|^2=q$ and pairwise inner product
$\langle x_i,x_j\rangle=\rho<q$, set $J_n:=\one\one^\top$,
$r_n^2:=q$, $\Delta_n:=q-\rho$ to obtain
\[
 \Sigma_n^{\mathrm{simp}}:=(r_n^2-\Delta_n)J_n+\Delta_n I_n\ge 0.
\]
Every query has flat competitor gap $\Delta_n$, so $N_n$ jumps from
$1$ to $n$ at $t=\Delta_n$ and $\Lambda_n=\log n/\Delta_n$,
$\alpha_n=1$, the full-count endpoint. The
$\beta_n\asymp\log n$ scaling of \citet{chen2025critical} is the
$\xi_\Lambda=1$ slice ($\Delta_n=\Theta(1)$); $\Delta_n=(\log n)^{1-\xi_\Lambda}$
realizes any $\xi_\Lambda\ge0$. Reading off the contact coordinates
gives $\xi_\alpha=0$ and $\xi_\Delta=1-\xi_\Lambda$, which satisfy the
decomposition \eqref{eq:xi-coord-decomposition} of
\cref{cor:xi-decomposition}: this configuration sits on the
$\xi_\alpha=0$ axis with $\xi_\Lambda$ entirely carried by
$\xi_\Delta$.

\begin{figure}[!t]
\centering
\includegraphics[width=0.9\linewidth]{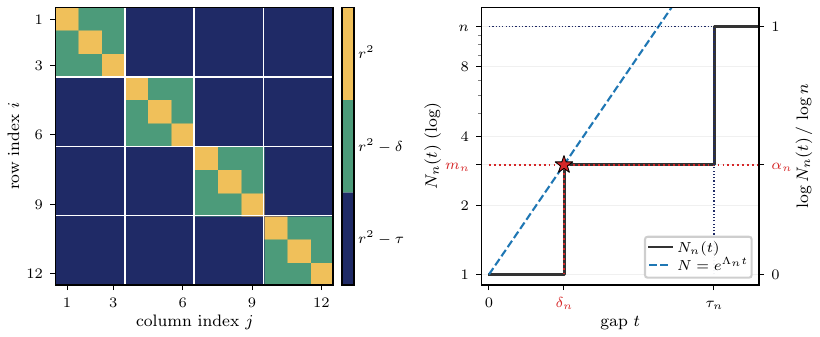}
\caption{The two-level block-constant configuration reads $\Lambda_n$ off the
within-block gap $\delta_n$ in the dominant regime
$\log m_n/\delta_n>\log n/\tau_n$. \emph{Left:} the three-valued
block-constant Gram matrix $\Sigma_n^{\mathrm{block}}$
\eqref{eq:two-level-gram} for $n=12$, $m_n=3$.
\emph{Right:} $N_n(t)$ jumps from $1$ to $m_n$ at $t=\delta_n$ and from
$m_n$ to $n$ at $t=\tau_n$; the dashed envelope $N=e^{\Lambda_n t}$ is
tangent at the contact point $(\Delta_n,N_n(\Delta_n))=(\delta_n,m_n)$
(red star), with secondary axis $\log N_n(t)/\log n$.}
\label{fig:two-level-block}
\end{figure}

\paragraph{Two-level block-constant extension.}
A balanced block-constant analogue of $\Sigma_n^{\mathrm{simp}}$
tunes $\alpha_n$ as well as $\xi_\Lambda$. Fix $m_n\in\{2,\dots,n\}$ with
$m_n\mid n$, and partition $\{1,\dots,n\}$ into blocks of size
$m_n$. Let $B_n$ be the block-membership matrix, equivalently the
block-diagonal matrix with $n/m_n$ diagonal blocks equal to
$J_{m_n}$. For $0<\delta_n<\tau_n\le r_n^2$, define
\begin{equation}\label{eq:two-level-gram}
 \Sigma_n^{\mathrm{block}}
 := \delta_n I_n+(\tau_n-\delta_n)B_n+(r_n^2-\tau_n)J_n\ \ge 0.
\end{equation}
Positivity follows from $I_n,B_n,J_n\ge 0$ and the nonnegative
coefficients. The entries are $r_n^2$ on the diagonal,
$r_n^2-\delta_n$ inside a block, and $r_n^2-\tau_n$ across blocks;
thus $r_n^2$ is a common score shift, while $\delta_n$ and $\tau_n$
are the two relevant row gaps. Consequently $N_n(t)=1$ for
$0\le t<\delta_n$, $N_n(t)=m_n$ for $\delta_n\le t<\tau_n$, and
$N_n(t)=n$ for $t\ge\tau_n$ (\cref{fig:two-level-block}). Hence
\begin{equation}\label{eq:two-level-Lambda}
 \Lambda_n
 =\max\!\left\{\frac{\log m_n}{\delta_n},\,\frac{\log n}{\tau_n}\right\}.
\end{equation}
The smaller-gap jump dominates when
$\log m_n/\delta_n>\log n/\tau_n$; then $\Delta_n=\delta_n$ and
$\alpha_n=\log m_n/\log n$. A prescribed $\Lambda_n=(\log n)^{\xi_\Lambda}$
is realized by taking $\delta_n=\log m_n/(\log n)^{\xi_\Lambda}$ and
$\tau_n=C\log n/(\log n)^{\xi_\Lambda}$ with $C>1$. With $m_n=n^\beta$
($\beta\in(0,1]$), the contact coordinates give $\xi_\alpha=0$ and
$\xi_\Delta=1-\xi_\Lambda$, recovering the simplex slice on a different
$\alpha_n=\beta$ level. With bounded $m_n=m\ge 2$,
$\alpha_n=\log m/\log n\asymp(\log n)^{-1}$ gives $\xi_\alpha=-1$, and
$\delta_n=\log m/(\log n)^{\xi_\Lambda}\asymp(\log n)^{-\xi_\Lambda}$
gives $\xi_\Delta=-\xi_\Lambda$; both branches satisfy
\eqref{eq:xi-coord-decomposition} (e.g.\ in the second branch
$\xi_\alpha-\xi_\Delta+1=-1-(-\xi_\Lambda)+1=\xi_\Lambda$). The block
extension is the simplest configuration in which $\xi_\alpha$ takes
values other than $0$. The reverse case has $\Delta_n=\tau_n$ and
$\alpha_n=1$, coinciding with the simplex endpoint. When $\Lambda_n$
is specified only up to $\asymp$, we pass to block-compatible
subsequences with $m_n\mid n$.

The block-constant configuration corresponds to token vectors with a
hierarchical cluster structure: within each block, tokens cluster
around a common centroid at within-block gap $\delta_n$, while distinct
blocks are separated by the larger across-block gap $\tau_n$. Such
hierarchically structured token representations are themselves an
object of study, with \citet{li2023how} giving a mechanistic account of
topic clustering inside trained transformers and
\citet{garnierbrun2025how} analyzing attention block structure under
hierarchical filtering of the input.

\subsection{Empirical gap-counting analysis}
\label{subsec:verification}

We apply the gap-counting construction of
\cref{def:gap-counting,def:upper-crowding,def:contact} on two
representative settings of \cref{sec:xi-sweep}: pretrained
Qwen-7B-Chat at extrapolation context lengths (inference-time) and
GPT-124M trained from scratch with a learnable per-position $\beta_n$
vector (training-time). The aim is to read $\hat\xi_\Lambda$ from
attention scores alone, align it with the $\xi_\beta^*$ optima of
\cref{sec:xi-sweep}, and verify the coordinate decomposition of
\cref{cor:xi-decomposition} on real attention rows.

\subsubsection{Aggregation protocol}
\label{subsubsec:verification-protocol}

\emph{Attention rows and exclusions.} Each tuple (layer, head,
document or validation batch, query position) determines an attention
row: a length-$n$ vector of scores $(z_{n,1},\dots,z_{n,n})$. From every
row, we read the contact triple $(\Lambda_n,\alpha_n,\Delta_n)$ via
\cref{def:gap-counting,def:upper-crowding,def:contact} at numerical
tolerance $\epsilon=10^{-6}$. Rows with $\log_{10}\Lambda_n>5$ are
exact-tie configurations ($N_n(0)\ge 2$) that saturate the
floating-point supremum in \eqref{eq:Lambda-def}; they are excluded
from every aggregate below.

\emph{Cells and $n$-grids.} Let $\mathcal C$ index the experimental
conditions: each \emph{cell} $c\in\mathcal C$ is a fixed
(setting, dataset, $\neval$) on Qwen and a fixed
(setting, dataset, training iteration) on nanoGPT. Each cell carries
a finite \emph{$n$-grid} $\mathcal N(c)\subset\mathbb N$, the set of
context lengths at which scores are sampled in that cell (values
listed in the next paragraph). For $c\in\mathcal C$ and $n\in\mathcal
N(c)$, let $  \mathcal R(c,n)
 :=\{\,\text{non-tied attention rows of cell $c$ at context length $n$}\,\}$
and write $K(c,n):=|\mathcal R(c,n)|$ for its cardinality.

\emph{Bucket-mean slope.} For $X\in\{\Lambda,\alpha,\Delta\}$ and a
row $i\in\mathcal R(c,n)$, write $X_n^{(i)}$ for the value of the
contact-triple coordinate $X$ read off row $i$ at context length
$n$. The \emph{bucket mean} of $\log X_n$ in cell $c$ at grid point
$n\in\mathcal N(c)$ is
$\overline{\log X}(c,n):=K(c,n)^{-1}\sum_{i\in\mathcal R(c,n)}\log X_n^{(i)}$.
The estimator $\hat\xi_X(c)$ is the OLS slope of
$\overline{\log X}(c,n)$ regressed on $\log\log n$ over
$n\in\mathcal N(c)$; setting $X=\Lambda,\alpha,\Delta$ yields the
three values $\hat\xi_\Lambda(c),\hat\xi_\alpha(c),\hat\xi_\Delta(c)$
reported in the body table.

\paragraph{Instantiation and intervals.}
We instantiate this on Qwen-7B-Chat over PG-19
\citep{rae2020compressive} and Proof-Pile-2
\citep{azerbayev2024llemma} at the two extrapolation context lengths
$\neval\in\{16384,32768\}$ of \cref{subsubsec:sweep-qwen} (4-point
$n$-grid per $\neval$, fit per-$\neval$ to keep the dynamic-NTK RoPE
rescale fixed within a fit; full Qwen acquisition in
\cref{sec:verification-extras}). On nanoGPT, GPT-124M is trained from
scratch with a learnable per-position $\beta_n$ at $\neval=1024$
(6-point grid $\{64,128,256,512,768,1024\}$, three evaluation seeds
pooled into $\mathcal R(c,n)$; configurations in
\cref{subsec:setup-gpt124m}, checkpoint selection and the
per-(layer, head) and rescaled-gap variants in
\cref{sec:nanogpt-gapcount-protocol}). Bracketed
intervals in \cref{tab:verif-xi-summary} (and tie\,\%) are half the
IQR of $B=200$ bootstrap resamples over the cell's natural unit set,
$(\ell,h)$ on Qwen and $(s,\ell,h)$ on nanoGPT.

\FloatBarrier
\subsubsection{Comparing the gap-counting reading with the \texorpdfstring{\cref{sec:xi-sweep}}{Section 3} sweeps}
\label{subsubsec:verification-coord-decomp}

\paragraph{Cell-by-cell agreement.}
\Cref{tab:verif-xi-summary} aligns $\hat\xi_\Lambda$ with the
\cref{sec:xi-sweep} optimum $\xi_\beta^*$ on eight
(setting, run, iter) cells. On Qwen, the per-$\neval$ PPL optima
$\xi^*_{\beta,\rm PPL}$ and $\hat\xi_\Lambda$ agree within the §3
$0.5$-grid (absolute gap $\le 0.26$), with both decreasing from
$\xi_\beta^*\in[2.31,2.50]$ at $\neval=16384$ to $[1.74,2.00]$ at
$\neval=32768$. On nanoGPT, $\hat\xi_\Lambda$ matches the
multiplier-side $\hat\xi_\beta$ on the same six-point domain to
$|\hat\xi_\Lambda-\hat\xi_\beta|\in\{0.12,0.14\}$ at
$\mathrm{iter}=25\mathrm{k}$ and $\{0.17,0.27\}$ at
$\mathrm{iter}=100\mathrm{k}$. The two columns are read off disjoint
inputs (attention scores vs.\ the trained $\beta_n$ vector);
\cref{thm:deterministic-} predicts equality only asymptotically,
consistent with the residual on the nanoGPT $n\le 1024$ window.

\begin{figure}[!t]
\centering
\includegraphics[width=0.33\linewidth]{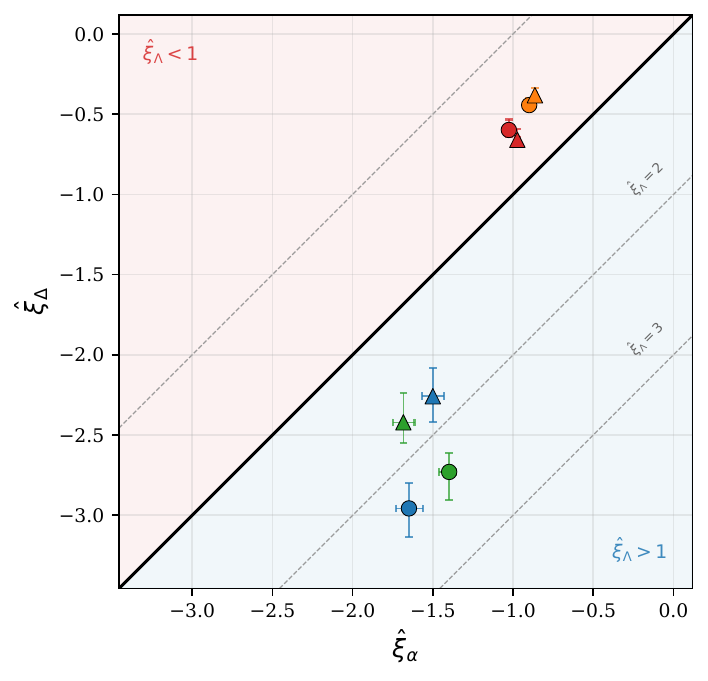}\hfill
\includegraphics[width=0.33\linewidth]{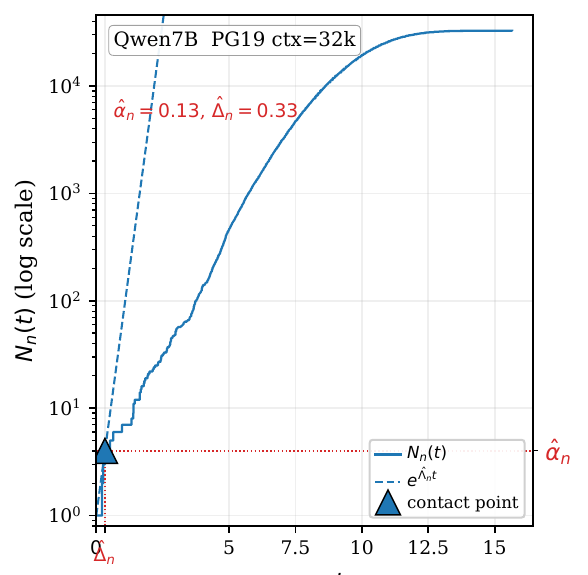}\hfill
\includegraphics[width=0.33\linewidth]{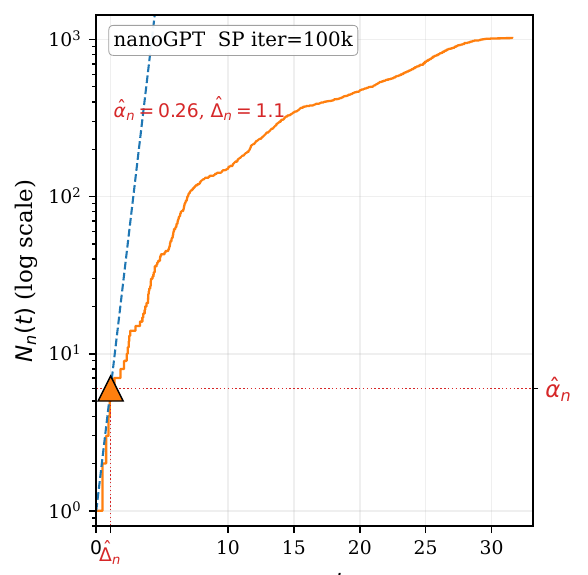}\\[0.4em]
\includegraphics[width=0.85\linewidth]{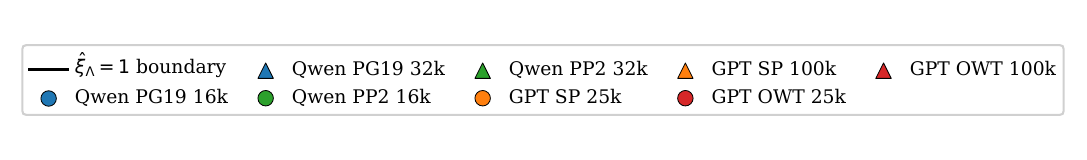}
\caption{(Left) Qwen and nanoGPT cells land on opposite wedges of the
$\hat\xi_\Lambda=1$ boundary $\hat\xi_\alpha=\hat\xi_\Delta$, so the
side of $1$ each setting reaches is fixed by the sign of
$\hat\xi_\alpha-\hat\xi_\Delta$ via \eqref{eq:xi-coord-decomposition}.
(Middle, right) The two sides recast as $N_n(t)$: Qwen's contact lies
near the origin on a steep $\Lambda_n$-envelope, nanoGPT's further out
on a shallow one.}
\label{fig:xi-coord-decomp}
\end{figure}

\begin{table}[!t]
\centering
\footnotesize
\setlength{\tabcolsep}{3pt}
\begin{tabular}{l l c c c c c c}
\toprule
setting & dataset & $\neval$/iter & tie\,\%
   & $\xi_\beta^*$ (\cref{sec:xi-sweep})
   & $\hat\xi_\Lambda$
   & $\hat\xi_\alpha$
   & $\hat\xi_\Delta$ \\
\midrule
Qwen7B   & PG19 & $16384$ & $7.4\,(\pm0.2)$ & $2.5$               & $2.31\,(\pm0.10)$ & $-1.65\,(\pm0.08)$ & $-2.96\,(\pm0.17)$ \\
Qwen7B   & PG19 & $32768$ & $8.8\,(\pm0.2)$ & $2.0$               & $1.76\,(\pm0.10)$ & $-1.50\,(\pm0.07)$ & $-2.26\,(\pm0.17)$ \\
Qwen7B   & PP2  & $16384$ & $6.8\,(\pm0.2)$ & $2.5$               & $2.33\,(\pm0.11)$ & $-1.40\,(\pm0.05)$ & $-2.73\,(\pm0.15)$ \\
Qwen7B   & PP2  & $32768$ & $7.2\,(\pm0.2)$ & $2.0$               & $1.74\,(\pm0.11)$ & $-1.68\,(\pm0.07)$ & $-2.42\,(\pm0.16)$ \\
\midrule
nanoGPT  & SP   & $25\mathrm{k}$  & $2.8\,(\pm0.1)$ & $0.40\,(\pm0.03)$ & $0.54\,(\pm0.03)$ & $-0.90\,(\pm0.01)$ & $-0.44\,(\pm0.04)$ \\
nanoGPT  & SP   & $100\mathrm{k}$ & $6.0\,(\pm0.4)$ & $0.35\,(\pm0.04)$ & $0.52\,(\pm0.03)$ & $-0.86\,(\pm0.01)$ & $-0.38\,(\pm0.04)$ \\
nanoGPT  & OWT  & $25\mathrm{k}$  & $4.0\,(\pm0.4)$ & $0.46\,(\pm0.05)$ & $0.57\,(\pm0.04)$ & $-1.03\,(\pm0.02)$ & $-0.60\,(\pm0.05)$ \\
nanoGPT  & OWT  & $100\mathrm{k}$ & $5.7\,(\pm0.3)$ & $0.42\,(\pm0.05)$ & $0.69\,(\pm0.05)$ & $-0.97\,(\pm0.02)$ & $-0.66\,(\pm0.05)$ \\
\bottomrule
\end{tabular}
\caption{$\hat\xi_\Lambda$, read off attention scores alone, lands on the same side of $\xi=1$ as the sweep optimum $\xi_\beta^*$ in every cell: both exceed $1$ on the four Qwen cells, both fall below $1$ on the four nanoGPT cells.
Each $\hat\xi_X$ ($X\in\{\Lambda,\alpha,\Delta\}$) is the OLS slope of the bucket-mean $\overline{\log X}(c,n)$ regressed on $\log\log n$ across the cell's $n$-grid (\cref{subsubsec:verification-protocol}); $\xi_\beta^*$ is the per-cell optimum from \cref{sec:xi-sweep}.
Bracketed intervals are half-IQRs of $B=200$ bootstrap resamples over $(\ell,h)$ on Qwen and $(s,\ell,h)$ on nanoGPT, with the same convention for tie\,\%, the fraction of exact-tie rows ($N_n(0)\ge 2$, excluded from aggregates).}
\label{tab:verif-xi-summary}
\end{table}

\paragraph{Coordinate decomposition.}
The decomposition \eqref{eq:xi-coord-decomposition} of
\cref{cor:xi-decomposition} carries through to the bucket-mean
slopes, and the right two columns of \cref{tab:verif-xi-summary}
confirm $\hat\xi_\Lambda=\hat\xi_\alpha-\hat\xi_\Delta+1$ row by row
at the table's precision; the side of $1$ on which $\hat\xi_\Lambda$
lands is fixed by the sign of $\hat\xi_\alpha-\hat\xi_\Delta$.
\Cref{fig:xi-coord-decomp} (left) plots the eight
$(\hat\xi_\alpha,\hat\xi_\Delta)$ pairs of
\cref{tab:verif-xi-summary}: the four Qwen extrapolation-time points
sit below the diagonal $\hat\xi_\alpha=\hat\xi_\Delta$
($|\hat\xi_\Delta|>|\hat\xi_\alpha|$, contact gap shrinks faster
with $\log\log n$), and the four nanoGPT training-time points sit
above it ($|\hat\xi_\alpha|>|\hat\xi_\Delta|$, contact count shrinks
faster). The middle and right panels recast one representative row
per contrast cell (Qwen PG-19 $\neval=32768$ at $n=32768$; nanoGPT
SlimPajama $\mathrm{iter}=100\mathrm{k}$ at $n=1024$) as $N_n(t)$.

\paragraph{Calibration character of $\Lambda_n$.}
\Cref{tab:verif-xi-summary} evaluates $\Lambda_n$ as a calibration
scale for inverse temperature: agreement with $\xi_\beta^*$ relies
only on \cref{cor:exponent-selection} and does not require TE and RE
scales to coincide. The rank-side checks in
\cref{subsec:empirical-rank-boundary} are a sanity check that the
stronger coordinate-wise observable is not pathological
(\cref{cor:rank-boundary-diffuse} on the four nanoGPT cells; the
rank-resolved substitute and a direct $p_n^*$ sweep on the four Qwen
cells with bounded $C_n$).

\section{Discussion and Conclusion}
\label{sec:discussion-conclusion}
This paper asks what determines the critical scaling of inverse temperature for self-attention in the long-context limit.
Our answer is that it is determined by $(\Lambda_n, \alpha_n, \Delta_n)$.
The cumulative gap-counting function $N_n(t)$ of \cref{def:gap-counting}
records how many competitors lie within each gap of the winning score,
and the upper-tail accumulation scale $\Lambda_n$ of \cref{def:upper-crowding}
gives the relevant scale. \Cref{thm:deterministic-} identifies
$\Lambda_n$ as a critical scale for arbitrary deterministic score rows;
\cref{cor:rank-boundary-diffuse} sharpens this to the standard
rank-collapse boundary whenever the contact-count entropy
$\alpha_n\log n=\Lambda_n\Delta_n$ diverges, with empirical verification
on the bounded-$C_n$ cells via \cref{subsec:empirical-rank-boundary}. Thus the exponent
$\xi_\beta$ in $\beta_n\asymp(\log n)^{\xi_\beta}$ is not universal:
any non-degenerate schedule has $\xi_\beta=\xi_\Lambda$
(\cref{cor:exponent-selection}), the logarithmic growth rate of
$\Lambda_n$ in the sense of \eqref{eq:xi-X-def}. Equivalently, the contact formula~\eqref{eq:contact-sandwich} shows that $\xi_\beta$ is determined
by both the contact count $\alpha_n$ and the contact gap
$\Delta_n$ of \cref{def:contact}, namely
$\xi_\beta=\xi_\alpha-\xi_\Delta+1$ (\cref{cor:xi-decomposition}).
This also explains why the mechanisms in \cref{tab:priorwork} give
different exponents (they induce different $N_n$, not different
universal laws), and why empirical long-context behavior can prefer
values away from the legacy logarithmic choice $\xi_\beta=1$.

\paragraph{Limitations and open problem.}
The main limitation of the present framework is that the score family, and hence \((\Lambda_n,\alpha_n,\Delta_n)\), is taken as given. We do not yet derive it from the architecture, positional encoding, data distribution, or training dynamics of a concrete transformer. A natural open problem is therefore to endogenize these quantities through the dynamics of self-attention. Recent inference-time theories of recurrent or infinitely deep self-attention describe token representations as interacting particles whose empirical geometry can develop clustered, metastable, or lower-dimensional structure across depth or loop time \citep{geshkovski2023emergence,geshkovski2024dynamic,karagodin2024clustering,bruno2025multiscale,tomihari2025recurrent,rigollet2026meanfield}. Complementary learning-time analyses show that training can progressively encode semantic, positional, or hierarchical structure into embeddings and attention maps \citep{li2023how,garnierbrun2025how,arnaboldi2025asymptotics}. Coupling such inference-time and learning-time dynamics with the present gap-counting theory would turn \(N_n(t)\), \(\Lambda_n\), and \((\alpha_n,\Delta_n)\) into observables that depend on depth, loop time, and training time, making it possible to predict how the critical inverse-temperature scale is shaped by the model and data rather than treating the score family as fixed.

\section*{Acknowledgments}
T.H.\ was supported by JST BOOST Grant Number JPMJBY24G4.
The authors are grateful to Ken M.\ Nakanishi for sharing details of the experimental setup for SSMax.

\bibliographystyle{hunsrtnatarxiv}
\bibliography{reference}

\newpage
\appendix
\section{Experimental setup}
\label{sec:experimental-setup}

This appendix collects the experimental basic setup shared across the
inference-time $\xi$-sweep of \cref{sec:xi-sweep} and the training-time
gap-exponent trajectory of \cref{subsubsec:verification-protocol}. The
per-experiment grids, seeds, $\xi_0$ values, and numerical tolerances
are recorded in the corresponding protocol appendices
(\cref{sec:xi-sweep-numerics,sec:nanogpt-gapcount-protocol,subsec:verification-protocol}).

\subsection{Datasets}
\label{subsec:setup-datasets}

The Qwen long-context $\xi$-sweep
(\cref{subsubsec:sweep-qwen}) draws documents from the
PG-19 \citep{rae2020compressive} test split and the Proof-Pile-2
\citep{azerbayev2024llemma} arXiv test split (up to $20$ per dataset
per $\neval$; the per-cell counts entering the perplexity averages
are listed in \cref{subsec:body-error-bars}). The Qwen-7B-Chat
gap-counting analysis (\cref{subsec:verification}) uses a smaller
fixed subset of these documents because the per-row analysis is
heavier: $4$ from PG-19 and $5$ from Proof-Pile-2 enter the headline
statistics of \cref{tab:verif-xi-summary}, which together with the
$(\xi_0,\neval,\text{ratio})$ grid produces the
$774{,}144$ attention rows enumerated in \cref{sec:verification-extras}. The
GPT-124M training-time runs of
\cref{subsubsec:sweep-nanogpt,subsubsec:verification-protocol} are
pretrained on SlimPajama \citep{soboleva2023slimpajama} and
OpenWebText \citep{gokaslan2019openwebtext}, one run per dataset.

\subsection{Qwen long-context evaluation}
\label{subsec:setup-qwen}

We probe two pretrained checkpoints: Qwen-1.8B-Chat with training
window $\ntrain=8192$, and Qwen-7B-Chat ($32$ transformer layers, $32$
attention heads per layer) also with $\ntrain=8192$. Both keep rotary
position embeddings \citep{su2024roformer} on with the dynamic-NTK
RoPE base schedule of \citet[\S 3.4]{peng2023yarn} defined just below.
On top of dynamic-NTK, we apply the legacy length-dependent multiplier
\[
  m(n;\xi)=\bigl(\max(1,\log n/\log \ntrain)\bigr)^\xi
\]
to the attention logits at extrapolation lengths $n>\ntrain$, equivalent
to the parameterization $\beta_n\propto(\log n)^\xi$ for $n>\ntrain$ and
to the no-multiplier baseline at $\xi=0$; the YaRN inverse-temperature
window $\xi=2$ \citep{peng2023yarn} corresponds to the YaRN schedule
as derived in \cref{sec:yarn-derivation}. The contexts $n$ and
$\xi$-grids vary by experiment and are recorded with the respective
figures.

\paragraph{Dynamic-NTK RoPE.}\label{para:dynamic-ntk-rope}
Standard RoPE \citep{su2024roformer} rotates the per-head query/key
pair in $\mathbb R^d$ using a fixed base $\theta_0$ (typically
$\theta_0=10000$): the $k$-th coordinate pair at position $i$ is
multiplied by $R(i\,\theta_0^{-2k/d})$. The \emph{dynamic-NTK} schedule
of \citet[\S 3.4]{peng2023yarn} (also \citealt{bai2023qwen} for Qwen)
replaces $\theta_0$ by a per-evaluation base $\theta_0\,s(\neval)$
where, with the extrapolation ratio
$\rho(\neval):=\max(1,\neval/\ntrain)$,
\[
 s(\neval)\;:=\;\rho(\neval)^{d/(d-2)}.
\]
The base is recomputed once per inference window of size $\neval$ and
then held fixed across positions $i\le\neval$ inside the window. The
high-frequency rotations are nearly unchanged while the
low-frequency ones are stretched, so embeddings of relative offsets
that exceed $\ntrain$ at evaluation time stay within the angular range
the model saw during training. Throughout this paper, ``dynamic-NTK
RoPE'' refers to this schedule, with the legacy multiplier $m(n;\xi)$
above applied as a separate inverse-temperature scalar on the attention
logits.

\paragraph{Sliding-window perplexity.}
We report \emph{back-half sliding-window perplexity}: a long document
is partitioned into non-overlapping windows of length $\neval$, and
within each window only the second half (positions $j\in(\neval/2,\neval]$)
contributes to the loss, so every scored token is conditioned on
$\ge\neval/2$ preceding tokens. Concretely, with $\ell(j)$ the
cross-entropy of the model on the $j$-th token of the window, the
window contribution to log-perplexity is
$\sum_{j>\neval/2}\ell(j)\bigm/(\neval/2)$, averaged across windows
and documents and exponentiated. This is the protocol called
\emph{strict sliding-window perplexity} in earlier drafts and in some
external references; we drop the modifier in the body since the
back-half restriction is the only convention used. The convention
controls position-context confounding at extrapolation lengths
$\neval>\ntrain$: under no-overlap windows the early positions of a
window see only short prefixes, which would otherwise dominate the
loss for $n>\ntrain$.

\subsection{GPT-124M training-time setup}
\label{subsec:setup-gpt124m}

GPT-124M \citep{radford2019language} ($12$ transformer layers, $12$
attention heads per layer, context length $\neval=1024$) is trained from
scratch using the nanoGPT codebase \citep{karpathy2022nanogpt} in four
configurations: two positional encodings (learned absolute, as in
GPT-2 \citep{radford2019language}; rotary \citep{su2024roformer})
$\times$ two pretraining datasets (SlimPajama, OpenWebText). Each
configuration runs for $100{,}000$ iterations with checkpoints saved
every $5000$ iterations; the seven milestones
$\mathrm{iter}\in\{0,5\textrm{k},10\textrm{k},25\textrm{k},
50\textrm{k},75\textrm{k},100\textrm{k}\}$ are the snapshots used by
the body and appendix figures.

\paragraph{Learnable inverse-temperature vector $\beta_n$.}
Each model carries an additional length-$\neval$ vector
$\beta_n$ (one parameter per context position $n$) tied across layers
and heads; per (layer, head) the actual attention multiplier is
\begin{equation}\label{eq:setup-betan-realization}
  \beta_{n,\ell,h}=s_{\ell,h}\,\beta_n+b_{\ell,h},
\end{equation}
with per-head scale $s_{\ell,h}$ and bias $b_{\ell,h}$ trained
alongside $\beta_n$. The $n$-dependence is carried entirely by
$\beta_n$; the body's $\beta_n$ regression fit
(\cref{subsubsec:sweep-nanogpt}) reads $\beta_n$ off after training
and fits it to $a_1(\log n)^\xi$, while the gap-exponent diagnostic
(\cref{subsubsec:verification-protocol}) compares raw and effective
gap exponents row-wise. Unlike SSMax, which fixes
$\beta_{n,\ell,h}=s_{\ell,h}\log n$ with a single scalar $s_{\ell,h}$
per head, here $\beta_n$ is unconstrained at initialization and is
shaped only by the training loss. The original SSMax design-rationale
experiment of \citet{nakanishi2025scalable} used the same
unconstrained $\beta_n$ vector and fit it by least squares to
$a_1\log n+a_2$, motivating the logarithmic constraint.

\paragraph{Hyperparameters.}
\Cref{tab:gpt124m-config} records the architecture, $\beta_n$
realization, and optimization settings shared across the four
configurations, together with the four (positional encoding, dataset)
pairs that distinguish them. All values are read from the training
configs of the runs (\texttt{n\_layer}, \texttt{n\_head},
\texttt{n\_embd}, optimizer schedule, etc.).

\begin{table}[!htbp]
\centering
\small
\begin{tabular}{l l}
\toprule
\multicolumn{2}{l}{\emph{Architecture (4 runs shared, GPT-2 124M)}} \\
\midrule
Layers $L$                     & $12$ \\
Attention heads $H$            & $12$ \\
Embedding dim $d_{\rm model}$  & $768$ \\
Per-head dim $d_{qk}=d_v$      & $64$ \\
FFN dim $d_{\rm ff}$           & $3072$ ($=4\,d_{\rm model}$) \\
Vocabulary size                & $50{,}257$ (GPT-2 BPE; padded to $50{,}304$) \\
Context length $\neval$        & $1024$ \\
Linear bias                    & enabled \\
Dropout                        & $0$ \\
Total parameters               & ${\sim}124$M \\
\midrule
\multicolumn{2}{l}{\emph{Inverse-temperature $\beta_n$ realization}} \\
\midrule
Per-position factor $\beta_n$  & learnable vector of length $\neval$, init.\ $\beta_n[n]=1$ \\
Per-(layer,head) realization   & $\beta_{n,\ell,h}=s_{\ell,h}\,\beta_n+b_{\ell,h}$ (\cref{eq:setup-betan-realization}) \\
$s_{\ell,h}, b_{\ell,h}$       & per-head, learnable; $s_{\ell,h}$ init.\ $1$, $b_{\ell,h}$ init.\ $0$ \\
\midrule
\multicolumn{2}{l}{\emph{Optimization (4 runs shared)}} \\
\midrule
Optimizer                      & AdamW \\
Learning-rate schedule         & cosine: $6{\times}10^{-4}\to6{\times}10^{-5}$ \\
Warmup                         & $1000$ iters \\
Weight decay                   & $0.1$ \\
$(\beta_1,\beta_2)$            & $(0.9, 0.95)$ \\
Gradient clip                  & $1.0$ \\
Per-step batch                 & $32$ sequences $\times$ grad-accum $64$ (effective $2048$) \\
Precision                      & bfloat16 \\
Iterations                     & $100{,}000$ \\
Checkpoint cadence             & every $5000$ iters (milestone keep) \\
\midrule
\multicolumn{2}{l}{\emph{Configurations (4 runs)}} \\
\midrule
1: learned absolute PE  & SlimPajama \\
2: learned absolute PE  & OpenWebText \\
3: RoPE ($\theta_0=10000$) & SlimPajama \\
4: RoPE                 & OpenWebText \\
\bottomrule
\end{tabular}
\caption{GPT-124M training-time configuration. The four runs share
the architecture, $\beta_n$ realization, and optimization settings
listed above; they differ only in the positional encoding (learned
absolute as in \citealt{radford2019language}, or RoPE
\citealp{su2024roformer}) and the pretraining dataset (SlimPajama
\citealp{soboleva2023slimpajama}, or OpenWebText
\citealp{gokaslan2019openwebtext}). Three evaluation seeds
$\{1337,1338,1339\}$ produce independent validation batches at every
checkpoint.}
\label{tab:gpt124m-config}
\end{table}

\subsection{Compute resources}
\label{subsec:compute-resources}

All experiments ran on academic GPU clusters equipped with
NVIDIA H100 GPUs ($96$~GB HBM3). Two job sizes were used: a
single-GPU job (one H100) and a full-node job (eight H100s).

\paragraph{Qwen inference (\cref{subsec:setup-qwen,subsec:verification}).}
The Qwen long-context $\xi_\beta$ sweep
(\cref{sec:xi-sweep,fig:ppl-xi-7b-pg19,fig:ppl-xi-7b-pp})
and the gap-counting attention-row capture
(\cref{subsubsec:verification-protocol}) each run as single-GPU jobs.
Per back-half sliding-window perplexity job (one $(\xi_\beta,\text{dataset})$
covering all five contexts $\neval\in\{8192,12288,16384,24576,32768\}$
on $20$ documents): ${\sim}25$ minutes wall on one H100, of which
${\sim}5$ minutes per context. Per attention-row gap capture job
(one $(\xi_0,\text{dataset},\neval,\text{doc})$): ${\sim}2$ minutes
wall on one H100. The full $\xi_\beta$ refinement to $0.1$ resolution
on Qwen-7B-Chat used $\le 60$ GPU-hours. The gap-counting capture for
the body Table~2 cells, the $\xi_0$-sweep, and the $\xi_0$
fixed-point refinement together used ${\sim}10$ additional GPU-hours
on the same single-H100 budget. Total Qwen H100-hours for the
content reported in this paper: ${\sim}80$.

\paragraph{nanoGPT training-time runs (\cref{subsec:setup-gpt124m,sec:nanogpt-gapcount-protocol}).}
Each of the four training-time GPT-124M runs of
\cref{tab:gpt124m-config} runs as a full-node job
($8$ H100, distributed via \texttt{torchrun} with effective batch $2048$
sequences of length $1024$, walltime budget $24$~hours per run).
A complete run reaches $100{,}000$ iterations in well under the
budget; total nanoGPT training compute reported in this paper is bounded by
$4\,\text{runs}\times 24\,\text{h}\times 8\,\text{H100} = 768$
H100-hours. Per-checkpoint dump
and post-processing for the seven milestone checkpoints
$\{0,5\textrm{k},\ldots,100\textrm{k}\}$ on three evaluation seeds
adds order $20$ GPU-hours.

\paragraph{Storage.}
The Qwen full-gap safetensors at the body's $(ds,\neval,\xi_0,\mathrm{doc})$
grid plus the off-grid points generated for the $\xi_0$ fixed-point
search occupy ${\sim}150$~GB of project-area storage; the nanoGPT
checkpoint set occupies ${\sim}30$~GB per training run.

\paragraph{Preliminary work.}
The headline compute above does not include preliminary $\xi_\beta$-
sweep iterations on Qwen-1.8B-Chat at intermediate grid resolutions,
exploratory $\xi_0$ ablations, and discarded gap-capture pipelines
(direct top-$k$ vs full sorted-gap dumps). Counting these, the full
research project consumed roughly twice the headline budget, all on
the same internal H100 clusters.

\subsection{Reproducibility capsule}
\label{subsec:reproducibility-capsule}

This subsection collects the environment and artifact pointers
needed to reproduce the headline experiments alongside the
protocols documented in
\cref{subsec:setup-qwen,subsec:setup-gpt124m,sec:nanogpt-gapcount-protocol,sec:verification-extras,subsec:body-error-bars}.

\paragraph{Environment.}
We list only the dependencies that affect the experiment outputs;
the SciPy-stack analysis packages used downstream of the captured
CSVs are omitted because any current version reproduces the same
numerics. The Qwen long-context evaluations run under Python~$3.11$
with PyTorch~$2.6$ (CUDA~$12.4$ build),
\texttt{transformers~$4.32.0$} (the version pinned by the
\texttt{Qwen-7B-Chat} model card and required by the bundled
custom modeling file loaded with \texttt{trust\_remote\_code=True}),
\texttt{flash-attn~$2.8.3$}, and \texttt{einops~$0.8$} (used inside
that modeling file). The GPT-124M training-time runs use
Python~$3.12$ with the same PyTorch build and
\texttt{tiktoken~$0.12$} (GPT-2 BPE). All experiments run on
NVIDIA~H100 ($96$~GB HBM3) GPUs.

\paragraph{Artifact identifiers.}
The Qwen checkpoints are the public \texttt{Qwen/Qwen-7B-Chat} and
\texttt{Qwen/Qwen-1\_8B-Chat} HuggingFace releases used as published
(no fine-tuning); their bundled \texttt{config.json} reports
\texttt{transformers\_version}~$4.32.0$, fixing the modeling-file
revision the \texttt{trust\_remote\_code} loader executes. The
training-time runs are built on the public nanoGPT codebase
\citep{karpathy2022nanogpt} (snapshot from spring~$2026$) with the
length-$\neval$ learnable $\beta_n$ vector inserted as in
\eqref{eq:setup-betan-realization}. The text corpora are the
\texttt{deepmind/pg19}, \texttt{EleutherAI/proof-pile-2} (arXiv
subset), \texttt{cerebras/SlimPajama-627B}, and
\texttt{Skylion007/openwebtext} HuggingFace dataset releases as of
the experiment dates; tokenization uses each model's bundled
tokenizer (Qwen's BPE for the Qwen evaluations, GPT-2 BPE via
\texttt{tiktoken} for the nanoGPT training and validation passes).

\paragraph{Modification surface.}
The Qwen-side per-token multiplier
$m(n;\xi)=\bigl(\max(1,\log n/\log \ntrain)\bigr)^\xi$ of
\cref{subsec:setup-qwen} is applied after the per-head dot-product
score
$z_{n,j}=\langle\mathbf{q}_{n,i_n},\mathbf{k}_{n,j}\rangle/\sqrt{d_{qk}}$
and before the softmax inside the modified Qwen forward pass; the
unrescaled $z_{n,j}$ that the gap-counting analysis reads is the
score before this multiplier and softmax are applied. Dynamic-NTK
RoPE is the upstream Qwen schedule with the per-$\neval$ base
rescale of \citet[\S 3.4]{peng2023yarn}, controlled by Qwen's
existing \texttt{use\_dynamic\_ntk} configuration switch. The
nanoGPT-side modification is a single length-$\neval$ per-position
scalar vector $\beta_n$ initialised to one and trained jointly with
the rest of the model; the per-(layer, head) affine
$\beta_{n,\ell,h}=s_{\ell,h}\beta_n+b_{\ell,h}$ then folds into the
attention computation as in \eqref{eq:setup-betan-realization}.

\paragraph{Procedure cross-references.}
Document selection, query-position rule, bucket and $n$-grid
choices, and bootstrap procedures are documented in the sections
referenced above; the per-row attention-score acquisition pipeline
is in \cref{sec:verification-extras}, and the reported $\pm$
half-widths are derived as in \cref{subsec:body-error-bars}.

\section{Detailed protocols and aggregate numerics for the $\xi_\beta$-sweeps}
\label{sec:xi-sweep-numerics}

This appendix records the body $\beta_n$ regression-fit protocol and
the training-time trajectory deferred from \cref{sec:xi-sweep}.

\subsection{Learnable inverse-temperature vector: $\beta_n$ regression-fit protocol and trajectory}
\label{subsec:nanogpt-pn-trajectory}

The four-configuration training schedule and the per-(layer, head)
realization \eqref{eq:setup-betan-realization} of the learnable vector
$\beta_n$ are as in \cref{subsec:setup-gpt124m}. At each of the seven
milestone checkpoints, we extract $\beta_n$ (a length-$1024$ vector)
and fit the bias-free regression model $\beta_n=a_1(\log n)^{\xi_\beta}$ in
two settings:

\begin{center}
\small
\setlength{\tabcolsep}{4pt}
\begin{tabular}{lccc}
\hline
& Domain & Bias $a_2$ & $\xi_\beta$ search \\
\hline
SSMax \citep{nakanishi2025scalable} (= M1) & $n\in[1,1024]$ & free & grid \\
Ours, grid (= M2) & $n\in\{64,\ldots,1024\}$ & $\equiv 0$ & grid (step $0.05$) \\
Ours, OLS slope & $n\in\{64,128,\ldots,1024\}$ (six points) & $\equiv 0$ & continuous \\
\hline
\end{tabular}
\end{center}

The body figures \cref{fig:nanogpt-pn-sp,fig:nanogpt-pn-owt} use the
grid version of Ours. \Cref{fig:nanogpt-pn-trajectory} reports the
M2 loss landscape on the same fine grid for all four runs at three
training milestones; the rotary runs are added here only as a
reference, since the body intentionally does not contrast positional
encodings.

\paragraph{Why $a_2\equiv 0$.}
The per-(layer, head) realization
\eqref{eq:setup-betan-realization} already places a layer-side scalar
bias inside the inverse-temperature parameterization. A free $a_2$ in
the post-hoc fit would re-introduce that same bias on top, leaving
$\xi_\beta$ under-identified by the residual curvature of $\beta_n$.
Empirically, on the trained checkpoints, M1 ($a_2$ free) selects
$\xi_\beta=1$ in only $3/28\;(11\%)$ snapshots and otherwise wanders the
grid with no stable trend across iters; the bias absorbs the
signal. Forcing $a_2\equiv 0$ removes this absorption and lets $\xi_\beta$
identify the residual log-log slope of $\beta_n$.

\paragraph{Why $n\ge 64$.}
The trimming threshold matches the smallest $n$ used by
the body's gap-counting estimator $\hat\xi_\Lambda$ of
\cref{def:gap-counting,def:contact}, namely
$n=64$ (the smallest of the six $n$ values in
\cref{sec:nanogpt-gapcount-protocol}). Defining $\hat\xi_\beta$ on the
same domain makes the comparison with $\hat\xi_\Lambda$ direct rather
than a domain-mismatch artefact; the residual finite-$n$ gap
$|\hat\xi_\Lambda-\hat\xi_\beta|$ is the value reported per cell in
\cref{tab:verif-xi-summary}. The first half of the context length
$n\le 50$ also contains the small-$n$ dynamics of $\beta_n$ that
deviate from the asymptotic power family; trimming there isolates the
scaling regime that \cref{cor:xi-decomposition} characterizes.

\paragraph{Trim sensitivity.}
\Cref{tab:n-trim-sens} reports $\hat\xi_\beta$ as the trim threshold
varies, on the two absolute-PE runs at $\mathrm{iter}=100\mathrm{k}$.
The $n\ge 64$ choice sits in a stable plateau (within $\pm 0.04$ of the
$n\ge 51$ value), and pushing the threshold to $n\ge 200$ shifts
$\hat\xi_\beta$ by $\le 0.16$ on the more sensitive dataset
(OpenWebText), consistent with $\beta_n$ being well approximated by
the power family on the asymptotic tail.

\begin{table}[h]
\centering
\small
\begin{tabular}{lcccccc}
\hline
trim & $n\ge 10$ & $n\ge 25$ & $n\ge 51$ & $n\ge 64$ & $n\ge 100$ & $n\ge 200$ \\
\hline
SlimPajama  & 0.305 & 0.336 & 0.368 & 0.380 & 0.410 & 0.477 \\
OpenWebText & 0.375 & 0.414 & 0.457 & 0.474 & 0.517 & 0.618 \\
\hline
\end{tabular}
\caption{$\hat\xi_\beta$ from the continuous OLS fit of $\beta_n$ to
$a_1(\log n)^{\xi_\beta}$ on $n\ge\text{trim}$, two absolute-PE
training-time GPT-124M runs at $\mathrm{iter}=100\mathrm{k}$. The
body uses the discrete six-point variant on
$n\in\{64,128,256,512,768,1024\}$, which gives $\hat\xi_\beta=0.346$
(SP) and $0.418$ (OWT).}
\label{tab:n-trim-sens}
\end{table}

\begin{figure}[t]
\centering
\includegraphics[width=\linewidth]{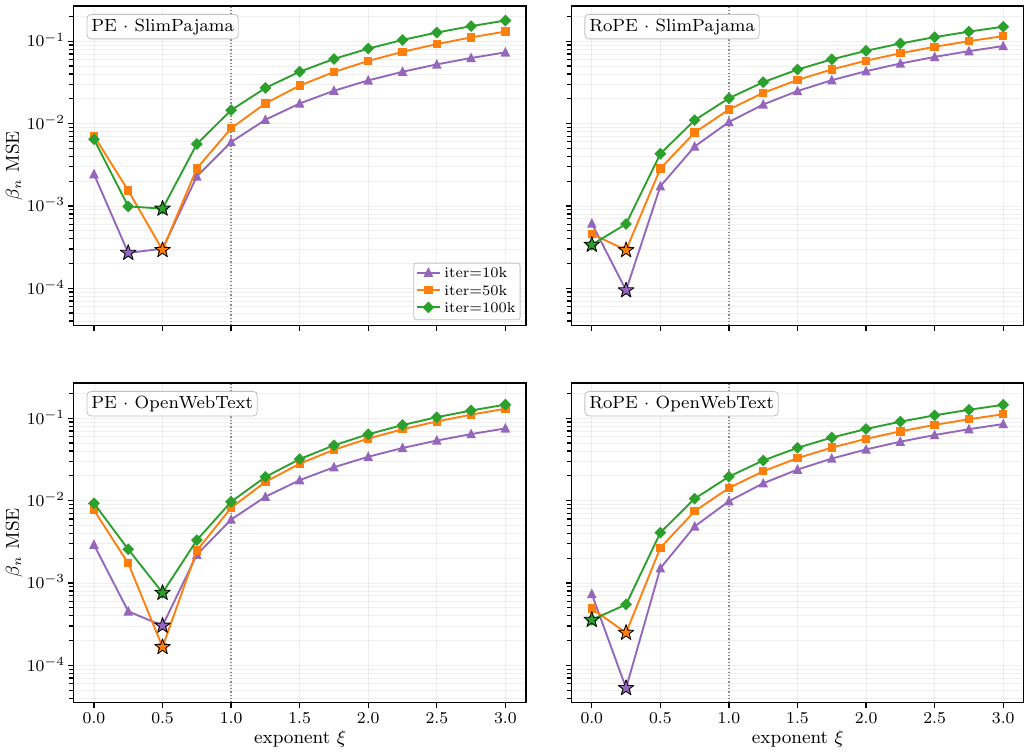}
\caption{M2 loss landscape on all four training-time GPT-124M runs.
The $y$-axis is the per-point MSE of the bias-free post-hoc
least-squares fit of $\beta_n$ to $a_1(\log n)^{\xi_\beta}$ on $n\ge 64$, on
a log scale; one curve per snapshot
$\mathrm{iter}\in\{10\mathrm{k},50\mathrm{k},100\mathrm{k}\}$, with
stars at per-curve argmin and a dotted vertical reference at the
logarithmic value $\xi_\beta=1$. PE and RoPE runs settle at distinct
exponents within each dataset.}
\label{fig:nanogpt-pn-trajectory}
\end{figure}

\subsection{Error-bar widths for the body four-panel figure}
\label{subsec:body-error-bars}

The cell-wise relative upper bounds quoted in
\cref{fig:body-four-panel} (SE on PPL $\le 10\%$ of the cell mean in
(a, b); half-IQR on MSE $\le 8\%$ of the point-estimate MSE in
(c, d)) are derived by the protocol below. Each width is the natural
sampling uncertainty of the y-axis quantity at that panel type,
reduced to a single per-panel number by taking the maximum across the
cells (Qwen) or $\xi_\beta$ grid points (nanoGPT) plotted in the panel.

\paragraph{Panels (a, b): doc-level standard error on PPL.}
At each $(\xi_\beta,\neval)$ cell, the body figure plots the per-document
strict perplexity $\mathrm{PPL}_{\rm strict}^{(d)}$ averaged over the
$n_{\rm doc}$ documents that fit at that $\neval$, with
$n_{\rm doc}\in\{14,15,\dots,19\}$ on PG-19 and $n_{\rm doc}=20$ on
Proof-Pile-2 (\cref{subsec:setup-qwen}). The reported quantity is the
doc-level standard error of the cell mean
$\mathrm{SE}=s/\sqrt{n_{\rm doc}}$, where $s$ is the sample standard
deviation of $\{\mathrm{PPL}_{\rm strict}^{(d)}\}_{d=1}^{n_{\rm doc}}$;
the cell-wise maxima of $\mathrm{SE}/\mathrm{mean}$ across the
$35$ cells per dataset are $9.52\%$ on PG-19 and $8.79\%$ on
Proof-Pile-2 (both attained at $\xi_\beta=0$, the length-multiplier-off
endpoint). The cell-wise medians are $7.49\%$ and $8.04\%$
respectively, so the body's $10\%$ envelope is loose by less than a
factor of $1.3$. A per-cell document-resampling bootstrap (B=200,
percentile half-IQR of the cell mean) gives a tighter $4.95\%$ /
$5.32\%$ but is dominated by the more conservative SE-based number
quoted in the body caption.

\paragraph{Panels (c, d): residual bootstrap of the M2 MSE curve.}
At each $\xi_\beta$ on the body grid, the MSE plotted in (c, d) is a
deterministic function of a single trained $\beta_n$ vector
(\cref{subsec:nanogpt-pn-trajectory}); its sampling uncertainty arises
from the regression residuals on positions $n\in[64,1024]$. Let
$\hat a_1$ denote the M2 estimate at the given $\xi_\beta$ and
$e_n=\beta_n-\hat a_1(\log n)^{\xi_\beta}$ the residual. We resample
$e_n$ with replacement across positions $B=200$ times; for each
resample $e_n^*$ we form
$y_n^*=\hat a_1(\log n)^{\xi_\beta}+e_n^*$, refit $a_1$, and recompute
the per-point MSE. The width is the half-IQR of these $B$ MSE values
divided by the point-estimate MSE. Across all six (dataset, iter)
snapshots (\cref{fig:nanogpt-pn-trajectory}), the per-grid maximum of
this ratio peaks at $7.29\%$ (SlimPajama, $\mathrm{iter}=50\mathrm{k}$,
near $\xi_\beta\!\approx\!0.55$); medians stay between $2.7\%$ and
$2.9\%$.

\section{Auxiliary results for the deterministic theory}
\label{sec:envelope-norm-obstruction}

This appendix collects strands deferred from \cref{sec:theory}.
\cref{subsec:discrete-representation} reduces the supremum
\eqref{eq:Lambda-def} defining $\Lambda_n$ to a finite maximum over
competitor gaps.
\cref{subsec:ties} handles the degenerate case of ties at the maximum.
\cref{subsec:score-realizability} gives the score-realizability reduction
used throughout \cref{subsec:gram-realizations}: any positive score matrix
arises from self-attention with $Q_n=K_n$.

Throughout this appendix we write
$\mathcal U_n:=\{z_n^*-z_{n,j}:1\le j\le n,\ z_{n,j}<z_n^*\}$
for the set of positive competitor gaps.

\subsection{Discrete representation of $\Lambda_n$}
\label{subsec:discrete-representation}

\begin{lemma}[Discrete representation of $\Lambda_n$]\label{lem:discreteGamma}
On the finite-$\Lambda_n$ domain ($N_n(0)=1$), the upper-tail accumulation scale of \cref{def:upper-crowding} satisfies
\[
 \Lambda_n
 =\sup_{t>0}\frac{\log N_n(t)}{t}
 =\max_{u\in\mathcal U_n}\frac{\log N_n(u)}{u}.
\]
If $N_n(0)\ge 2$ (ties at the maximum), then $\Lambda_n=\infty$.
\end{lemma}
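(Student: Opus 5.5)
The plan is to exploit that $N_n$ is a right-continuous, nondecreasing step function with finitely many jumps, located exactly at the points of $\{0\}\cup\mathcal U_n$. List the distinct positive gaps as $0<u_1<u_2<\dots<u_K$ (if $\mathcal U_n=\emptyset$ then all scores tie and $N_n(0)=n\ge 2$, which is the tie case). On the finite-$\Lambda_n$ domain $N_n(0)=1$, so $N_n(t)=1$ and $\log N_n(t)/t=0$ for $0<t<u_1$. On each interval $[u_k,u_{k+1})$ (with $u_{K+1}:=\infty$) we have $N_n(t)=N_n(u_k)$ constant and at least $2$, hence $\log N_n(t)/t=\log N_n(u_k)/t$ is strictly decreasing in $t$, and its supremum over the interval is attained at the left endpoint $t=u_k$, with value $\log N_n(u_k)/u_k>0$. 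This is the monotonicity between consecutive competitor gaps already quoted in the text after \cref{def:upper-crowding}.

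Taking the supremum over the finitely many intervals then gives
\[
 \sup_{t>0}\frac{\log N_n(t)}{t}=\max\Bigl\{0,\ \max_{1\le k\le K}\frac{\log N_n(u_k)}{u_k}\Bigr\}=\max_{u\in\mathcal U_n}\frac{\log N_n(u)}{u},
\]
since every term in the inner maximum is positive. The maximum exists because $\mathcal U_n$ is a finite nonempty set, and the supremum is actually attained at some competitor gap. This is exactly what is needed for the contact gap $\Delta_n$ of \cref{def:contact} to be well defined.

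For the tie case, if $N_n(0)\ge 2$ then for every $t>0$ we have $\log N_n(t)/t\ge \log 2/t$. Letting $t\downarrow 0$ sends this to $+\infty$, so $\Lambda_n=\infty$.

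There is no real obstacle here. The only care needed is bookkeeping: use right-continuity of $N_n$ (its jumps include the endpoints, by the $\le$ in \cref{def:gap-counting}), and handle separately the interval $(0,u_1)$, where the ratio is $0$, together with the degenerate empty-$\mathcal U_n$ case.
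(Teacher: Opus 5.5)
Your proposal is correct and follows essentially the same route as the paper: $N_n$ is piecewise constant between consecutive points of $\{0\}\cup\mathcal U_n$, so $\log N_n(t)/t$ is nonincreasing on each such interval, the supremum is attained at a left endpoint in $\mathcal U_n$, and the tie case follows from $\log 2/t\to\infty$ as $t\downarrow 0$. You also spell out two details the paper leaves implicit: the initial interval $(0,u_1)$, where the ratio is $0$, and the all-ties case $\mathcal U_n=\emptyset$.
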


\begin{proof}
$t\mapsto N_n(t)$ is right-continuous and piecewise constant with jumps only at points of $\{0\}\cup\mathcal U_n$, so $\log N_n(t)$ is constant on each interval $[u,u')$ between consecutive points of $\{0\}\cup\mathcal U_n\cup\{+\infty\}$, while the denominator $t$ is increasing. Hence $t\mapsto\log N_n(t)/t$ is nonincreasing on each such interval, and the supremum is attained at some $u\in\mathcal U_n$. If $N_n(0)\ge 2$, $N_n(t)\ge 2$ for every $t>0$ and $\log N_n(t)/t\to\infty$ as $t\downarrow 0$.
\end{proof}

\paragraph{Sorted-rank rendering.}
The body figure \cref{fig:lambda-schematic} plots the
counting-function view $t\mapsto N_n(t)$. The same data admits an
equivalent rendering as the sorted competitor gaps
$j\mapsto g_{n,(j)}$, obtained by swapping the two axes
$j\leftrightarrow N$: the right-continuous step $N_n(t)$ on log $y$
maps to the ascending staircase $g_{n,(j)}$ on log $x$, and the
exponential envelope $N=e^{\Lambda_n t}$ inverts to its logarithmic
form $t=\log j/\Lambda_n$. The contact point
$(\Delta_n,N_n(\Delta_n))$ on the right is the same point
$(j^*,\Delta_n)$ with $j^*=N_n(\Delta_n)$ on the left, and
\cref{lem:discreteGamma}'s claim that the supremum is attained at a
competitor gap is the statement that the sorted-rank staircase
touches the envelope on its riser, not its tread.
\Cref{fig:lambda-schematic-full} shows both renderings together.

\begin{figure}[ht]
\centering
\includegraphics[width=\linewidth]{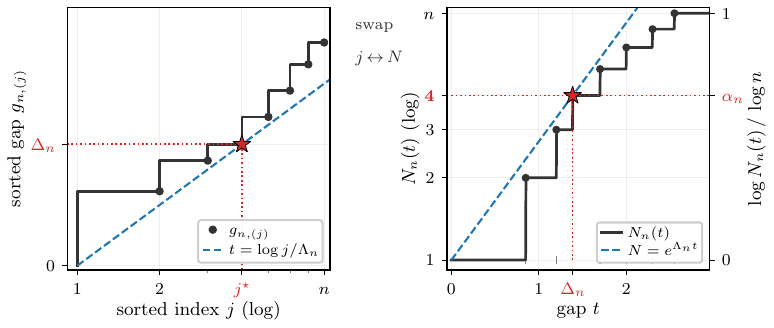}
\caption{Two equivalent renderings of the same $N_n$.
\emph{Right:} the cumulative gap-count curve $t\mapsto N_n(t)$ on the gap axis $t\ge 0$ and a log $y$-axis, with the dashed exponential envelope $N=e^{\Lambda_n t}$ \eqref{eq:Nn-exp-envelope}; the red star marks the contact point $(\Delta_n,N_n(\Delta_n))$, and the secondary right axis reads $\alpha_n=\log N_n(\Delta_n)/\log n$. This is the panel reproduced in the body as \cref{fig:lambda-schematic}.
\emph{Left:} the equivalent sorted-competitor-gap rendering $j\mapsto g_{n,(j)}$ on a log $j$-axis, with $t=\log j/\Lambda_n$ as the inverse form of the accumulation envelope; the red star sits at the same contact point with horizontal coordinate $j^*=N_n(\Delta_n)$. The two panels are related by the axis swap $j\leftrightarrow N$.}
\label{fig:lambda-schematic-full}
\end{figure}

\subsection{Ties at the maximum}
\label{subsec:ties}

\begin{corollary}[Ties at the maximum]\label{cor:ties}
Let
\[
 n_{\max}:=N_n(0)=\#\{j\in\{1,\dots,n\}:z_{n,j}=z_n^*\}.
\]
Then:
\begin{enumerate}
\item if $n_{\max}\ge 2$, then $\Lambda_n=\infty$;
\item for every $\beta>0$,
\[
 \max_j p_{n,j}(\beta)\le \frac{1}{n_{\max}},
 \qquad
 H_n(\beta)\ge \log n_{\max}.
\]
\end{enumerate}
In particular, if $n_{\max}\ge 2$ for infinitely many $n$, then no finite TE scale exists.
\end{corollary}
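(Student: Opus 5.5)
Item 1 is already contained in \cref{lem:discreteGamma}, and we re-derive it in one line. If $n_{\max}\ge 2$, then $N_n(t)\ge N_n(0)=n_{\max}\ge 2$ for every $t>0$, because $N_n$ is nondecreasing. Hence $\log N_n(t)/t\ge \log 2/t\to\infty$ as $t\downarrow 0$, and the supremum in \eqref{eq:Lambda-def} is $+\infty$.

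For item 2, we work with the normalised partition function $Z_n(\beta)=\sum_j e^{-\beta(z_n^*-z_{n,j})}$. Each of the $n_{\max}$ maximisers contributes exactly $1$, and every other term is positive, so $Z_n(\beta)\ge n_{\max}$. Since $p_{n,j}(\beta)=e^{-\beta(z_n^*-z_{n,j})}/Z_n(\beta)$ and the numerator is at most $1$, we get $\max_j p_{n,j}(\beta)\le 1/Z_n(\beta)\le 1/n_{\max}$. For the entropy, we use $-\log p_{n,j}\ge -\log\max_k p_{n,k}\ge \log n_{\max}$ for every $j$. Averaging against $p_{n,j}$ gives
\[
H_n(\beta)=\sum_j p_{n,j}(-\log p_{n,j})\ge \log n_{\max}.
\]
This is the min-entropy bound $H\ge -\log\max_j p_j$.

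For the final claim, suppose a finite TE scale $(X_n)$ existed. Take $\beta_n:=nX_n$, so that $\beta_n/X_n\to\infty$. By \cref{def:coarse-critical-scale} this would force $H_n(\beta_n)\to 0$. But along the infinitely many $n$ with $n_{\max}\ge 2$, item 2 gives $H_n(\beta_n)\ge\log 2$, which is a contradiction. Note that a positive deterministic sequence is by definition finite, so no separate treatment of $X_n=\infty$ is needed. In the same way, the identification with $\Lambda_n$ in \cref{thm:deterministic-} fails here, since $\Lambda_n=\infty$ by item 1.

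No step is a real obstacle. The only point requiring care is that the entropy lower bound must hold uniformly in $\beta$, including the regime $\beta\to\infty$. The min-entropy argument gives exactly this uniformity, since the bound $Z_n(\beta)\ge n_{\max}$ holds for every $\beta>0$.
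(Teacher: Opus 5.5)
Your proposal is correct and follows the paper's proof essentially step for step: the same $t\downarrow 0$ blow-up of $\log N_n(t)/t$ for item 1, the same min-entropy bound for item 2, and the same contradiction via $\beta_n:=nX_n$ for the final claim. The only cosmetic difference is that you get $\max_j p_{n,j}(\beta)\le 1/n_{\max}$ from $Z_n(\beta)\ge n_{\max}$, whereas the paper observes that the $n_{\max}$ tied top coordinates are equal and have total mass at most $1$; this is the same fact.
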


\begin{proof}
If $n_{\max}\ge 2$, then $z_n^*-z_{n,j}=0$ for the $n_{\max}$ tokens at the maximum, so $N_n(t)\ge n_{\max}$ for every $t>0$. Consequently
\[
 \frac{\log N_n(t)}{t}\ge \frac{\log n_{\max}}{t}\to\infty\qquad(t\downarrow 0),
\]
so $\Lambda_n=\infty$ by \cref{def:upper-crowding}. The top $n_{\max}$ softmax coordinates are equal, and their total mass is at most $1$, so each of them is at most $1/n_{\max}$. Thus $\max_j p_{n,j}(\beta)\le 1/n_{\max}$. Since every softmax coordinate is at most $1/n_{\max}$,
\[
 H_n(\beta)
 =-
 \sum_{j=1}^n p_{n,j}(\beta)\log p_{n,j}(\beta)
 \ge
 -\sum_{j=1}^n p_{n,j}(\beta)\log(1/n_{\max})
 =\log n_{\max}.
\]
For the final assertion, suppose $(X_n)\subset(0,\infty)$ is a finite TE scale and set $\beta_n:=nX_n$, so $\beta_n/X_n=n\to\infty$. Then \cref{def:coarse-critical-scale} forces $H_n(\beta_n)\to 0$, contradicting $H_n(\beta_n)\ge\log n_{\max}\ge\log 2$ on the infinite subsequence with $n_{\max}\ge 2$.
\end{proof}

\subsection{Score realizability via positive Gram matrices}
\label{subsec:score-realizability}

\begin{definition}[Positive matrix]
\label{def:positive-matrix}
A matrix $A\in\R^{n\times n}$ is \emph{positive semidefinite} if $x^\top A x\ge 0$
for every $x\in\R^n$. We call $A$ \emph{positive}, and write $A\ge 0$, if $A$ is
symmetric and positive semidefinite. Every Gram matrix considered in the sequel is
positive in this sense, so we use the predicate $A\ge 0$ throughout.
\end{definition}

\begin{proposition}[Gram reduction for positive score matrices]
\label{prop:psd-gram-reduction}
Let $\Sigma_n\in\R^{n\times n}$ satisfy $\Sigma_n\ge 0$ with $\rank \Sigma_n\le \dimqkn$. Then
there exist matrices $Q_n,K_n\in\R^{\dimqkn\times n}$ with
$Q_n=K_n$ such that the self-attention score matrix equals $\Sigma_n$:
\[
 \frac{\langle \qvec_{n,i},\kvec_{n,j}\rangle}{\sqrt{\dimqkn}}=(\Sigma_n)_{ij},
 \qquad 1\le i,j\le n .
\]
In particular, every row of $\Sigma_n$ is realized as a fixed-query self-attention score vector.
\end{proposition}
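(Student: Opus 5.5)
The plan is the standard spectral factorization of a positive semidefinite matrix, rescaled by $\dimqkn^{1/4}$ on each side to absorb the $1/\sqrt{\dimqkn}$ normalization. Since $\Sigma_n\ge 0$ in the sense of \cref{def:positive-matrix}, it is symmetric, so the spectral theorem gives $\Sigma_n=U\Lambda U^\top$ with $U$ orthogonal and $\Lambda=\mathrm{diag}(\lambda_1,\dots,\lambda_n)$. Positive semidefiniteness gives $\lambda_i\ge 0$. Let $r:=\rank\Sigma_n\le\dimqkn$, and order the eigenvalues so that exactly $\lambda_1,\dots,\lambda_r$ are positive. Then
\[
 \Sigma_n=\sum_{i=1}^r\lambda_i u_iu_i^\top=V^\top V,
 \qquad V:=\bigl(\sqrt{\lambda_1}u_1,\dots,\sqrt{\lambda_r}u_r\bigr)^\top\in\R^{r\times n}.
\]

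Next, pad $V$ with $\dimqkn-r$ zero rows to obtain $\tilde V\in\R^{\dimqkn\times n}$; this leaves $\tilde V^\top\tilde V=\Sigma_n$ unchanged. Then set
\[
 Q_n=K_n:=\dimqkn^{1/4}\,\tilde V,
\]
and let $\qvec_{n,i}$ and $\kvec_{n,j}$ be the $i$-th and $j$-th columns of $Q_n$ and $K_n$. This gives
\[
 \langle\qvec_{n,i},\kvec_{n,j}\rangle=\sqrt{\dimqkn}\,(\tilde V^\top\tilde V)_{ij}=\sqrt{\dimqkn}\,(\Sigma_n)_{ij},
\]
and dividing by $\sqrt{\dimqkn}$ yields the claimed identity. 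The final clause then follows immediately: for a fixed query index $i$, the row $((\Sigma_n)_{ij})_j$ is exactly the score vector \eqref{eq:sdp-attention} with $q=\qvec_{n,i}$.

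To match \eqref{eq:sdp-attention} literally, with embeddings and weights, take $d_n=n$, $x_{n,j}=e_j$ and $W_n^Q=W_n^K=Q_n$. Alternatively, take $x_{n,j}$ to be the $j$-th column of $\dimqkn^{1/4}\tilde V$, so $d_n=\dimqkn$, and set $W_n^Q=W_n^K=I$.

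There is no real obstacle here. The only points needing care are the rank condition, which is what lets the factorization fit into $\dimqkn$ rows (and is why we pad with zero rows), and the placement of the scaling factor $\dimqkn^{1/4}$ on both sides.
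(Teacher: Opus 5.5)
Your proposal is correct and follows the paper's proof. The paper likewise factors $\Sigma_n=B_n^\top B_n$ with $B_n$ padded by zero rows to size $\dimqkn\times n$, and sets $Q_n=K_n=\dimqkn^{1/4}B_n$; you make that factorization explicit via the spectral theorem, which shows where the rank bound is used, and you add an explicit embedding/weight realization.
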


\begin{proof}
Choose $B_n\in\R^{\dimqkn\times n}$ with $B_n^\top B_n=\Sigma_n$, padding by zero rows if
necessary. Setting $Q_n=K_n=\dimqkn^{1/4}B_n$ gives
$Q_n^\top K_n/\sqrt{\dimqkn}=B_n^\top B_n=\Sigma_n$.
\end{proof}

\section{Rank-collapse boundary and resolved accumulation}
\label{app:rank-boundary}

The body's TE scale $\Lambda_n$ controls the top-two/entropy pair via
\cref{thm:deterministic-} but does not by itself imply coordinate-wise
rank collapse (\cref{subsec:collapse}). This appendix records the
exact rank-collapse criterion, defines the corresponding \emph{rank
boundary}, identifies the positive-resolution accumulation proxy that
sandwiches it, and proves \cref{cor:rank-boundary-diffuse}.

Define the \emph{rank free energy}
\begin{equation}\label{eq:rank-free-energy}
  \mathcal F_n(\beta):=\log Z_n(\beta).
\end{equation}

\begin{proposition}[Exact coordinate-wise rank-collapse criterion]\label{prop:exact-rank-criterion}
Let
\[
  \mathcal G_n(\beta):=\max_{i,j}\bigl|p_{n,i}(\beta)-p_{n,j}(\beta)\bigr|.
\]
For any positive deterministic sequence $(\beta_n)$,
\[
  \mathcal G_n(\beta_n)\to 0
  \quad\Longleftrightarrow\quad
  \mathcal F_n(\beta_n)\to\infty.
\]
Equivalently,
\[
  \max_j\left|p_{n,j}(\beta_n)-\frac{1}{n}\right|\to 0
  \quad\Longleftrightarrow\quad
  \mathcal F_n(\beta_n)\to\infty.
\]
\end{proposition}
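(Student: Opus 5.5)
The plan is to rewrite both sides through the normalised weights $w_j:=e^{-\beta(z_n^*-z_{n,j})}\in(0,1]$. With these, $p_{n,j}(\beta)=w_j/Z_n(\beta)$, the winner has $w_{j^*}=1$, and $Z_n=\sum_j w_j\le n$. The key observation is that the largest softmax coordinate equals exactly $1/Z_n(\beta)=e^{-\mathcal F_n(\beta)}$. Every criterion in the statement should therefore reduce to whether $\max_j p_{n,j}$ tends to $0$.

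\emph{Direction $\mathcal F_n(\beta_n)\to\infty\Rightarrow$ rank collapse.} All coordinates lie in $[0,\,1/Z_n(\beta_n)]$, so any two of them differ by at most $\max_j p_{n,j}$. This gives
\[
\mathcal G_n(\beta_n)\le \max_j p_{n,j}(\beta_n)=e^{-\mathcal F_n(\beta_n)}\to 0 .
\]
The same bound handles the uniform version:
\[
\bigl|p_{n,j}-1/n\bigr|\le \max\{p_{n,j},1/n\}\le e^{-\mathcal F_n}+1/n\to 0 ,
\]
where $n\to\infty$ is used for the $1/n$ term.

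\emph{Direction rank collapse $\Rightarrow\mathcal F_n(\beta_n)\to\infty$.} Suppose $\mathcal G_n(\beta_n)\to0$. I compare the top coordinate with the smallest one:
\[
p_{\max}-p_{\min}\le\mathcal G_n .
\]
Since $\sum_j p_{n,j}=1$, the smallest coordinate satisfies $p_{\min}\le 1/n$. Hence
\[
e^{-\mathcal F_n}=p_{\max}\le \mathcal G_n+1/n\to0 ,
\]
which gives $\mathcal F_n\to\infty$. For the uniform version, $|p_{\max}-1/n|\to0$ directly gives $p_{\max}\to0$, and the same conclusion follows. Together these prove the four implications, and the equivalence of the two left-hand conditions follows because both are equivalent to the same right-hand side.

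The only step needing care is to notice that ties or the $1/n$ term cause no trouble. The whole argument rests on the identity $\max_j p_{n,j}=1/Z_n$, which holds even with ties, because the normalised partition function has winner weight exactly $1$. Beyond that, I expect no real obstacle: the proposition is an elementary sandwich $\max_j p_{n,j}-1/n\le\mathcal G_n\le\max_j p_{n,j}$, together with $n\to\infty$.
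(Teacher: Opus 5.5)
Your proposal is correct and takes essentially the same route as the paper. Both arguments use the identity $\max_j p_{n,j}(\beta)=1/Z_n(\beta)=e^{-\mathcal F_n(\beta)}$, the sandwich $\max_j p_{n,j}-1/n\le\mathcal G_n\le\max_j p_{n,j}$ (with the lower bound coming from $\min_j p_{n,j}\le 1/n$), and the reduction of the uniform-law version to $\max_j p_{n,j}\to 0$ via $1/n\to 0$.
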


\begin{proof}
The largest unnormalised softmax weight equals $1$, so
\[
  p_n^*(\beta):=\max_j p_{n,j}(\beta)=\frac{1}{Z_n(\beta)}=e^{-\mathcal F_n(\beta)}.
\]
Since $\mathcal G_n(\beta)\le p_n^*(\beta)$, the condition
$\mathcal F_n(\beta_n)\to\infty$ implies $\mathcal G_n(\beta_n)\to 0$.
Conversely, $\min_j p_{n,j}(\beta)\le 1/n$, hence
\[
  p_n^*(\beta)
  \le
  \mathcal G_n(\beta)+\frac{1}{n}.
\]
Thus $\mathcal G_n(\beta_n)\to 0$ implies $p_n^*(\beta_n)\to 0$, which is
equivalent to $\mathcal F_n(\beta_n)\to\infty$.

The coordinate-wise uniform-law equivalence follows from
\[
  p_n^*(\beta)\to 0
  \quad\Longleftrightarrow\quad
  \max_j\left|p_{n,j}(\beta)-\frac{1}{n}\right|\to 0,
\]
because $1/n\to 0$ and $0\le p_{n,j}(\beta)\le p_n^*(\beta)$.
\end{proof}

\begin{definition}[Rank boundary]\label{def:rank-boundary}
For $0\le r\le \log n$, define
\[
  B_n^{\mathrm{rank}}(r)
  :=
  \sup\bigl\{\beta\ge 0:\mathcal F_n(\beta)\ge r\bigr\}.
\]
\end{definition}

\Cref{prop:exact-rank-criterion} implies that coordinate-wise rank
collapse holds along $(\beta_n)$ if and only if there exists
$r_n\to\infty$, $r_n\le\log n$, such that
\[
  \beta_n\le B_n^{\mathrm{rank}}(r_n)
\]
eventually.

\begin{definition}[Resolved accumulation scale]\label{def:resolved-crowding}
For $r\ge 0$, define
\[
  \Lambda_n^{(r)}
  :=
  \sup_{t>0:\,\log N_n(t)>r}
  \frac{\log N_n(t)-r}{t},
\]
with the convention $\sup\emptyset=0$.
\end{definition}

At zero resolution,
\[
  \Lambda_n^{(0)}=\Lambda_n.
\]
Thus the upper-tail accumulation scale of the body is the
\emph{zero-resolution accumulation scale}. The rank-collapse boundary is
obtained by taking a positive diverging resolution $r_n\to\infty$,
which removes finite near-tie clusters from the envelope.

\begin{proposition}[Resolved accumulation implies rank-gap collapse]\label{prop:resolved-implies-rank}
Let $r_n\to\infty$. If
\[
  \beta_n<\Lambda_n^{(r_n)}
\]
eventually, then $\mathcal G_n(\beta_n)\to 0$.
\end{proposition}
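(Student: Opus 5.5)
By \cref{prop:exact-rank-criterion}, it suffices to show $\mathcal F_n(\beta_n)=\log Z_n(\beta_n)\to\infty$. The plan is to repeat the subcritical lower bound from \cref{thm:deterministic-}, with the resolution $r_n$ supplying the divergence that the contact count $\alpha_n\log n$ supplied there.

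\textbf{Step 1: choose a witness gap.} Since $\beta_n<\Lambda_n^{(r_n)}$ and \cref{def:resolved-crowding} defines $\Lambda_n^{(r_n)}$ as a supremum, for each large $n$ there is some $t_n>0$ with $\log N_n(t_n)>r_n$ and
\[
 \frac{\log N_n(t_n)-r_n}{t_n}>\beta_n .
\]
This is the only point where the hypothesis enters. The supremum need not be attained, but strict inequality lets us pick a near-maximizer. Also, the supremum is nonempty, since $\beta_n>0$ while $\sup\emptyset=0$.

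\textbf{Step 2: lower-bound the partition function.} Each of the $N_n(t_n)$ tokens with gap at most $t_n$ contributes at least $e^{-\beta_n t_n}$ to $Z_n(\beta_n)$. Hence
\[
 \mathcal F_n(\beta_n)\ge \log N_n(t_n)-\beta_n t_n> r_n\to\infty .
\]
\Cref{prop:exact-rank-criterion} then gives $\mathcal G_n(\beta_n)\to 0$.

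\textbf{Where the difficulty lies.} The argument holds for every $n$ large enough that $\beta_n<\Lambda_n^{(r_n)}$, so no subsequence issues arise, and the main obstacle is already absorbed into \cref{prop:exact-rank-criterion}. What is left is bookkeeping. The case of ties at the maximum is harmless: then $N_n(0)\ge2$, but the counting bound above never uses $N_n(0)=1$. The normalisation must also be checked: $Z_n(\beta)=\sum_j e^{-\beta(z_n^*-z_{n,j})}$ has winner weight $1$, so $\mathcal F_n=\log Z_n=-\log p_n^*$, which matches the proposition's use.
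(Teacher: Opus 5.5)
Your proposal is correct and matches the paper's proof: you pick a witness $t_n$ with $\beta_n t_n<\log N_n(t_n)-r_n$ from the supremum defining $\Lambda_n^{(r_n)}$, bound $Z_n(\beta_n)\ge N_n(t_n)e^{-\beta_n t_n}>e^{r_n}$, and conclude via \cref{prop:exact-rank-criterion}. Your remark that the index set is nonempty because $\beta_n>0$ and $\sup\emptyset=0$ is a small detail the paper leaves implicit.
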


\begin{proof}
By the definition of $\Lambda_n^{(r_n)}$, for all large $n$ there
exists $t_n>0$ such that
\[
  \beta_n t_n<\log N_n(t_n)-r_n.
\]
The $N_n(t_n)$ tokens within gap $t_n$ each contribute at least
$e^{-\beta_n t_n}$ to the partition function. Hence
\[
  Z_n(\beta_n)
  \ge
  N_n(t_n)\,e^{-\beta_n t_n}
  >
  e^{r_n}.
\]
Since $r_n\to\infty$, \cref{prop:exact-rank-criterion} gives
$\mathcal G_n(\beta_n)\to 0$.
\end{proof}

Define the Laplace envelope
\begin{equation}\label{eq:laplace-envelope}
  S_n(\beta):=\sup_{t\ge 0}\bigl\{\log N_n(t)-\beta t\bigr\}.
\end{equation}

\begin{lemma}[Laplace-envelope sandwich]\label{lem:laplace-sandwich}
For every $\beta>0$,
\[
  S_n(\beta)
  \le
  \mathcal F_n(\beta)
  \le
  S_n(\beta)+\log(1+\log n).
\]
\end{lemma}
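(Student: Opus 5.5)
The plan is to prove the two inequalities of \cref{lem:laplace-sandwich} separately, working with the normalised partition function $Z_n(\beta)=\sum_j e^{-\beta(z_n^*-z_{n,j})}$, so that $\mathcal F_n(\beta)=\log Z_n(\beta)$. We may assume $N_n(0)=1$. If $N_n(0)\ge 2$, the same argument applies verbatim, since only the counting curve enters.

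\emph{Lower bound.} Fix any $t\ge 0$. The $N_n(t)$ tokens whose gap is at most $t$ each contribute at least $e^{-\beta t}$ to $Z_n(\beta)$, exactly as in the subcritical step of \cref{thm:deterministic-} and in \cref{prop:resolved-implies-rank}. Hence $Z_n(\beta)\ge N_n(t)e^{-\beta t}$, and therefore $\mathcal F_n(\beta)\ge \log N_n(t)-\beta t$. Taking the supremum over $t\ge 0$ gives $S_n(\beta)\le\mathcal F_n(\beta)$.

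\emph{Upper bound.} Start from the integral representation \eqref{eq:Z-by-parts}, $Z_n(\beta)=\beta\int_0^\infty e^{-\beta t}N_n(t)\,dt$. The pointwise bound $N_n(t)e^{-\beta t}\le e^{S_n(\beta)}$ alone cannot be integrated over $[0,\infty)$, so the range must be cut.

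The key observation is that $N_n(t)\le n$ for every $t$, and $N_n(t)=n$ once $t\ge T_n:=\max_j(z_n^*-z_{n,j})$. I would split the integral at $t=T$ with $T:=\log n/\beta$.

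On $[0,T]$, use $e^{-\beta t}N_n(t)\le e^{S_n(\beta)}$. This piece contributes at most $\beta T e^{S_n(\beta)}=(\log n)\,e^{S_n(\beta)}$.

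On $[T,\infty)$, use $N_n(t)\le n$. This piece contributes at most $\beta\int_T^\infty n e^{-\beta t}\,dt=n e^{-\beta T}=1$. Since $S_n(\beta)\ge\log N_n(0)-0\ge 0$, we have $1\le e^{S_n(\beta)}$.

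Adding the two pieces gives $Z_n(\beta)\le(1+\log n)e^{S_n(\beta)}$. Taking logarithms yields the claimed upper bound $\mathcal F_n(\beta)\le S_n(\beta)+\log(1+\log n)$.

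The only delicate step is choosing the cut point. The truncation must make the head term cost only a factor $\beta T$ that is independent of $\beta$. At the same time, the tail beyond $T$ must be bounded using only the trivial count bound $N_n\le n$. The choice $T=\log n/\beta$ balances these requirements exactly: it turns the head into $\log n$ and the tail into $1$. It also has the advantage of avoiding any dependence on the largest gap $T_n$ or on $\Lambda_n$.
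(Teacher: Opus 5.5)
Your proof is correct, but the upper bound takes a different route from the paper's.

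\textbf{The paper's route.} It works discretely. It sorts the gaps $0=t_{n,1}\le\dots\le t_{n,n}$ and notes that $N_n(t_{n,j})\ge j$, so each term satisfies $e^{-\beta t_{n,j}}\le e^{S_n(\beta)}/j$. Summing gives the harmonic factor $\sum_{j\le n}1/j\le 1+\log n$. No integral representation is needed, and the $\log n$ loss is the price of bounding each rank separately.

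\textbf{Your route.} You start from the Laplace form \eqref{eq:Z-by-parts} and cut at $\beta T=\log n$. Here the $\log n$ is the length of the head window measured in units of $1/\beta$. The tail is handled only by the trivial count $N_n\le n$. Every step checks out, including the use of $S_n(\beta)\ge\log N_n(0)\ge 0$ to absorb the tail's contribution of $1$.

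\textbf{What each approach buys.} Your argument actually gives the slightly sharper bound $Z_n(\beta)\le 1+(\log n)e^{S_n(\beta)}$. If you optimize the cut at $\beta T=\log n-S_n(\beta)$ (admissible since $S_n(\beta)\le\log n$), you get
\[
\mathcal F_n(\beta)\le S_n(\beta)+\log\bigl(1+\log n-S_n(\beta)\bigr),
\]
which loses nothing when $S_n(\beta)=\log n$. The paper's version is shorter. It also stays entirely at the level of the sorted-gap staircase, matching the sorted-rank picture used elsewhere in that appendix.
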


\begin{proof}
For the lower bound, fix $t\ge 0$. The $N_n(t)$ tokens with gap at
most $t$ each contribute at least $e^{-\beta t}$, hence
\[
  Z_n(\beta)\ge N_n(t)\,e^{-\beta t}.
\]
Taking the supremum over $t$ gives $S_n(\beta)\le \mathcal F_n(\beta)$.

For the upper bound, write the ordered competitor gaps as
$0=t_{n,1}\le t_{n,2}\le\cdots\le t_{n,n}$. Since $N_n(t_{n,j})\ge j$, the
definition of $S_n$ implies
\[
  \log j-\beta t_{n,j}\le S_n(\beta).
\]
Thus
\[
  e^{-\beta t_{n,j}}\le \frac{e^{S_n(\beta)}}{j}.
\]
Summing over $j$ gives
\[
  Z_n(\beta)
  \le
  e^{S_n(\beta)}\sum_{j=1}^n\frac{1}{j}
  \le
  e^{S_n(\beta)}\,(1+\log n).
\]
Taking logarithms proves the upper bound.
\end{proof}

Consequently, for $r>\log(1+\log n)$,
\[
  \Lambda_n^{(r)}
  \le
  B_n^{\mathrm{rank}}(r)
  \le
  \Lambda_n^{(r-\log(1+\log n))}.
\]
Therefore, at resolutions $r_n\gg\log\log n$, the exact rank boundary
is described by the resolved accumulation scale up to the envelope
discretisation error $\log(1+\log n)$.

\begin{corollary}[Contact-count entropy criterion for rank collapse]\label{cor:diffuse-contact-rank}
Assume $\beta_n/\Lambda_n\to 0$. If the contact-count entropy
$C_n:=\log N_n(\Delta_n)=\Lambda_n\Delta_n$ satisfies
$C_n\to\infty$, then $\mathcal G_n(\beta_n)\to 0$.
\end{corollary}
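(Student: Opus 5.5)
The plan is to reduce the claim to the exact rank-collapse criterion of \cref{prop:exact-rank-criterion}. That proposition says it suffices to show $\mathcal F_n(\beta_n)=\log Z_n(\beta_n)\to\infty$. We bound $Z_n(\beta_n)$ from below using only the tokens inside the contact gap, exactly as in the subcritical half of the proof of \cref{thm:deterministic-}. The difference is that we keep the exponent in absolute terms rather than as a power of $n$. This is the lower half of \cref{lem:laplace-sandwich} evaluated at $t=\Delta_n$.

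\textbf{Step 1: lower bound on the partition function.} Each of the $N_n(\Delta_n)$ tokens with $z_n^*-z_{n,j}\le\Delta_n$ contributes at least $e^{-\beta_n\Delta_n}$ to $Z_n(\beta_n)$. Taking logarithms gives
\[
 \mathcal F_n(\beta_n)\ge \log N_n(\Delta_n)-\beta_n\Delta_n .
\]
By the contact formula \eqref{eq:contact-sandwich}, $\log N_n(\Delta_n)=\Lambda_n\Delta_n=C_n$. With $s_n:=\beta_n/\Lambda_n$ we have $\beta_n\Delta_n=s_nC_n$, and therefore
\[
 \mathcal F_n(\beta_n)\ge (1-s_n)\,C_n .
\]

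\textbf{Step 2: divergence.} By hypothesis $s_n\to0$, so $1-s_n\ge 1/2$ eventually. Then $\mathcal F_n(\beta_n)\ge C_n/2\to\infty$. Equivalently, in the language of \cref{prop:resolved-implies-rank}, one may take resolution $r_n:=C_n/2\to\infty$ and $t_n=\Delta_n$, which gives $\beta_n<\Lambda_n^{(r_n)}$ eventually. Either route yields $\mathcal G_n(\beta_n)\to0$.

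\textbf{Main obstacle.} There is essentially none; the only point requiring care is bookkeeping. We must make sure that $\Delta_n$ is a genuine contact gap, so that $\log N_n(\Delta_n)=\Lambda_n\Delta_n$ holds exactly. This holds on the finite-$\Lambda_n$ domain by \cref{lem:discreteGamma}, because the supremum is attained there. We must also check that the identification $C_n=\alpha_n\log n$ matches \cref{def:contact}. \Cref{cor:rank-boundary-diffuse} then follows by combining this result with the supercritical half of \cref{thm:deterministic-}.
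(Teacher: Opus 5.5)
Your proof is correct and matches the paper's own argument: the contact-gap lower bound $Z_n(\beta_n)\ge N_n(\Delta_n)e^{-\beta_n\Delta_n}=\exp\{(1-s_n)C_n\}\to\infty$, then \cref{prop:exact-rank-criterion}. Your alternative route through \cref{prop:resolved-implies-rank} with $r_n=C_n/2$ and $t_n=\Delta_n$ is also valid, since it only needs $s_n<1/2$ eventually.
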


\begin{proof}
Set $s_n=\beta_n/\Lambda_n$. The contact gap $\Delta_n$ gives
\[
  Z_n(\beta_n)
  \ge
  N_n(\Delta_n)\,e^{-\beta_n\Delta_n}
  =
  \exp\bigl\{(1-s_n)\,C_n\bigr\}.
\]
Since $s_n\to 0$ and $C_n\to\infty$, the right-hand side
diverges. \Cref{prop:exact-rank-criterion} applies.
\end{proof}

\begin{proof}[Proof of \cref{cor:rank-boundary-diffuse}]
The supercritical implication is the supercritical direction of
\cref{thm:deterministic-} and uses no contact-count hypothesis. The
subcritical implication is exactly \cref{cor:diffuse-contact-rank}.
\end{proof}

\begin{example}[Finite contact count: TE and RE scales separate]\label{ex:rank-fails}
The condition $C_n\to\infty$ in \cref{cor:rank-boundary-diffuse}
cannot be dropped: the score family below has $C_n=\log 2$ (bounded)
and rank collapse fails, even though $(\Lambda_n)$ is the TE scale.
Set
\[
 z_{n,1}=0,\qquad
 z_{n,2}=-\frac{\log 2}{\log n},\qquad
 z_{n,j}=-\log n\ \ (j\ge 3),
\]
so that $z_n^*=0$ and the competitor gaps are
$z_n^*-z_{n,2}=(\log 2)/\log n$ and
$z_n^*-z_{n,j}=\log n$ for $j\ge 3$. Then $\Lambda_n=\log n$,
$\Delta_n=(\log 2)/\log n$, and $\alpha_n=(\log 2)/\log n\to 0$. With
$\beta_n=\sqrt{\log n}$ so that $\beta_n/\Lambda_n\to 0$, the softmax
row converges to $(\tfrac12,\tfrac12,0,\dots,0)$: top-two collapse
holds, but the limit is not the uniform law and rank collapse fails.
A rank-resolved analysis is given in
\cref{ex:finite-contact-separation}.
\end{example}

\begin{example}[Rank-resolved analysis of the finite-contact case]\label{ex:finite-contact-separation}
For the score family of \cref{ex:rank-fails},
$\Lambda_n=\log n$ because the contact gap $\Delta_n=(\log 2)/\log n$
attains the zero-resolution envelope. The two competitor gap levels
satisfy $\log N_n((\log 2)/\log n)=\log 2$ and $\log N_n(\log n)=\log n$.
For any $r_n\to\infty$ with $r_n=o(\log n)$, eventually $r_n>\log 2$,
so the contact gap $\Delta_n$ is excluded by the constraint
$\log N_n(t)>r_n$ in \cref{def:resolved-crowding}, leaving only
$t=\log n$:
\[
  \Lambda_n^{(r_n)}
  =
  \frac{\log n-r_n}{\log n}
  \sim 1.
\]
Thus the TE scale is order $\log n$, while the rank-resolved
accumulation scale is order one. With $\beta_n=\sqrt{\log n}$,
$Z_n(\beta_n)\to 2$ and
$\mathcal G_n(\beta_n)\to\tfrac12$.
\end{example}

\subsection{Empirical instances on the verification cells}
\label{subsec:empirical-rank-boundary}

The verification cells of \cref{tab:verif-xi-summary} occupy two
distinct regimes of \cref{cor:rank-boundary-diffuse}.

\paragraph{TE-to-RE upgrade.}
\Cref{thm:deterministic-} identifies $\Lambda_n$ as the TE scale
unconditionally; the upgrade to RE scale via
\cref{cor:rank-boundary-diffuse} requires the contact-count entropy
$C_n=\alpha_n\log n$ to diverge, equivalently $\xi_\alpha>-1$. The
four nanoGPT cells of \cref{tab:verif-xi-summary} have
$\hat\xi_\alpha\in(-1.04,-0.86)$, so the relaxed condition holds with
$1+\hat\xi_\alpha>0$ giving a slow $C_n\asymp(\log n)^{1+\hat\xi_\alpha}$
divergence. The four Qwen cells have $\hat\xi_\alpha\in(-1.69,-1.40)$
on the observed window; since $C_n\ge\log 2$ always, the bucket-mean
slope $\hat\xi_\alpha<-1$ cannot persist asymptotically, and the
analytical RE-scale upgrade of \cref{cor:rank-boundary-diffuse} does
not directly apply, with the rank-resolved boundary of
\cref{prop:exact-rank-criterion} as the formal substitute. The
empirical boundary character of $\Lambda_n$ on these cells is then
verified directly below.

\paragraph{Boundary check on \texorpdfstring{$\Lambda_n$}{Lambda-n}.}
Independently of the analytical sufficient condition, the boundary
character of $\Lambda_n$ can be tested by reading the per-row maximum
softmax weight $p_n^*(\beta)=1/Z_n(\beta)$ at $\beta=\gamma\Lambda_n$
on a sweep around criticality. \Cref{tab:verif-pmax-boundary} reports
the median $p_n^*$ at $\gamma\in\{0.5,1.0,1.5\}$ and the per-cell
fraction of rows with $p_n^*\le 1/\log n$ at $\gamma=0.1$
(strict subcritical) on the four Qwen cells. Across all cells the
median traces a smooth monotone curve from $\sim 10^{-3}$ at
$\gamma=0.1$ to $\sim 1$ at $\gamma=4$, crossing $\sim 0.42$ at
$\gamma=1$, with $\sim 90\%$ of rows below $1/\log n$ at $\gamma=0.1$
and $0\%$ at $\gamma\ge 1$. The framework's $\Lambda_n$ separates the
subcritical-collapse regime from the supercritical-concentration
regime empirically, including on the Qwen cells where the contact-
count entropy is bounded and \cref{cor:rank-boundary-diffuse} does not
directly apply. \Cref{ex:rank-fails} shows that this analytical
upgrade is sharp in the worst case (a contrived score family with
$C_n$ bounded fails rank collapse), but the Qwen rows in the bounded-
$C_n$ regime do not realize that pathology at the operating $\beta$.

\begin{table}[t]
\centering
\footnotesize
\setlength{\tabcolsep}{4pt}
\begin{tabular}{l l c c c c}
\toprule
& & \multicolumn{3}{c}{median $p_n^*$ at $\beta=\gamma\Lambda_n$}
  & frac.\ $p_n^*\le 1/\log n$ \\
\cmidrule(lr){3-5}\cmidrule(lr){6-6}
dataset & $\neval$ & $\gamma=0.5$ & $\gamma=1.0$ & $\gamma=1.5$
   & at $\gamma=0.1$ \\
\midrule
PG-19        & $16384$ & $0.20$ & $0.45$ & $0.64$ & $89\%$ \\
PG-19        & $32768$ & $0.17$ & $0.42$ & $0.63$ & $91\%$ \\
Proof-Pile-2 & $16384$ & $0.18$ & $0.42$ & $0.63$ & $89\%$ \\
Proof-Pile-2 & $32768$ & $0.15$ & $0.40$ & $0.62$ & $93\%$ \\
\bottomrule
\end{tabular}
\caption{Boundary check via $p_n^*(\gamma\Lambda_n)$ on Qwen-7B-Chat
attention rows at $\xi_0=2$ ($45{,}467$ rows total across the four
cells, exact-tie rows excluded). Median $p_n^*$ at $\gamma\in
\{0.5,1.0,1.5\}$ and per-cell fraction with $p_n^*\le 1/\log n$ at
$\gamma=0.1$. On the wider sweep $\gamma\in\{0.1,0.25,0.5,0.75,1.0,
1.5,2.0,4.0\}$ the median $p_n^*$ is monotone in $\gamma$ on every
cell, ranging from $\sim\!0.001$ at $\gamma=0.1$ to $\sim\!0.96$ at
$\gamma=4$, and the same monotonicity holds on the nanoGPT cells.}
\label{tab:verif-pmax-boundary}
\end{table}

\section{Derivation of the YaRN inverse-temperature scaling}
\label{sec:yarn-derivation}

This appendix spells out why the YaRN context-window extension
of \citet{peng2023yarn} produces an effective inverse-temperature
scaling $\beta_n\asymp(\log n)^2$, i.e.\ implicit exponent $\xi=2$ in
the parameterisation of \cref{subsec:prior-comparison}.

\citet{peng2023yarn} replace the standard softmax logit
$q_m^{\top}k_n/\sqrt{d}$ with $q_m^{\top}k_n/(t\sqrt{d})$ for a
length-dependent temperature $t=t(s)$, where $s=n/\ntrain$ is the ratio of
the target context length $n$ to the training window $\ntrain$. The
recommended schedule (their \S 3.3) is
\[
 \tfrac{1}{\sqrt{t(s)}}\;=\;1+0.1\,\ln s,
\]
and the multiplier $1/\sqrt{t(s)}$ is \emph{multiplied onto the rotary
embeddings of both the query and the key}. Substituting into the logit
gives a rescaling
\[
 \beta_n
 \;\propto\;
 1/t(n/\ntrain)
 \;=\;
 \bigl(1+0.1\ln(n/\ntrain)\bigr)^2
 \;\asymp\;(\log n)^2,
\]
i.e.\ a scalar multiplier $\asymp(\log n)^2$ asymptotically. The fact
that the multiplier is applied to both sides of the inner product, rather
than once, is what doubles the exponent relative to a
$\log n$-only temperature, and places YaRN in the $\xi=2$ row of
\cref{tab:priorwork}.

\section{Training-time exponent measurement: acquisition protocol, per-seed granularity, and limitations}
\label{sec:nanogpt-gapcount-protocol}

This appendix gives the per-row gap-counting acquisition protocol,
the $\beta_n$ regression-fit protocol, and limitations for the
training-time consistency check of
\cref{subsubsec:verification-protocol}. The upstream training setup
that produces the learnable inverse-temperature vector $\beta_n$
being measured here, namely the models, datasets, four-configuration
schedule, and the per-(layer, head) realization
\eqref{eq:setup-betan-realization}, is documented in
\cref{subsec:setup-gpt124m}; the body $\beta_n$ regression fit itself
is in \cref{subsec:nanogpt-pn-trajectory}.

\paragraph{Gap-counting estimator $\hat\xi_\Lambda$.}
We analyse the four training-time GPT-124M runs (two PE datasets, two
RoPE) of \cref{subsec:setup-gpt124m} at the seven milestone
checkpoints. For each checkpoint, three evaluation seeds produce
independent validation batches ($B=16$ sequences, context lengths
$n\in\{64,128,256,512,768,1024\}$, block size $1024$). For each
(layer, head, $n$, batch) row we record the current-layer dot product
$z_{n,j}=\langle q,k_j\rangle/\sqrt d$ with $z^*_n:=\max_j z_{n,j}$.
Per-row $\Lambda_n$ and the contact coordinates
$(\alpha_n,\Delta_n)$ are computed from
\cref{def:contact} via the competitor gaps $z^*_n-z_{n,j}$ at
numerical tolerance $\epsilon=10^{-6}$. The body's
$\hat\xi_\Lambda$ is the bucket-mean slope estimator of
\cref{subsubsec:verification-protocol}: rows are pooled across
evaluation seeds into the index sets $\mathcal R(c,n)$, the bucket
mean is taken over those pooled rows at each of the six $n$ values,
and $\hat\xi_\Lambda$ is the OLS slope of those bucket means against
$\log\log n$. Because each of the three seeds contributes the same
$(\ell,h,\text{batch},n)$ structure, the pooled-row slope coincides
with the mean of three per-seed slopes; the body's headline
half-width is the bootstrap-IQR/2 over $(s,\ell,h)$ tuples of
\cref{subsubsec:verification-protocol}. Rows with
$\log_{10}\Lambda_n>5$ (exact-tie configurations) are
dropped before averaging, matching the Qwen protocol of
\cref{subsec:verification-protocol}.

\paragraph{Bootstrap intervals.}
For each cell $c$ let $\mathcal T(c)$ be the cell's bootstrap-unit
set: $\{(\ell,h)\}$ on Qwen and $\{(s,\ell,h)\}$ on nanoGPT, the
latter pooled across the three evaluation seeds. Each draw selects
$|\mathcal T(c)|$ tuples uniformly with replacement from
$\mathcal T(c)$, recomputes the bucket-mean slope $\hat\xi_X(c)$ from
the rows in $\mathcal R(c,n)$ that belong to the drawn tuples
(separately at each $n\in\mathcal N(c)$), and the half-IQR over
$B=200$ such draws is the half-width reported in
\cref{tab:verif-xi-summary}; tie\,\% follows the same convention. The
$\xi_\beta^*$ column carries no interval on Qwen ($0.5$-grid PPL
minimum) and the six-point OLS standard error of $\hat\xi_\beta$ on
nanoGPT.

\paragraph{$\beta_n$-fit estimator $\hat\xi_\beta$.}
At every checkpoint $\beta_n$ is read off the model state and fitted
to the bias-free power family $\beta_n=a_1(\log n)^{\xi_\beta}$ at the
same six $n$ that the gap-counting uses. The OLS slope
of $\log\beta_n$ on $\log\log n$ at those six points yields
$\hat\xi_\beta$ (closed-form least squares, no grid). The shared
domain matters: \cref{thm:deterministic-} predicts equality of
$\hat\xi_\Lambda$ and $\hat\xi_\beta$ in the limit, but only when both
estimators are read on the same $n$ set; this is the choice we make
throughout. The bias-free constraint $a_2\equiv 0$ is justified in
\cref{subsec:nanogpt-pn-trajectory}.

\paragraph{Limitations.}
The check is limited in two respects.
(i) \emph{Single training seed per configuration.} The
$\hat\xi_\Lambda$ band is over three evaluation batches at fixed
training seed; variation from initialisation and data ordering is not
measured. $\hat\xi_\beta$ has no eval-seed variance by construction.
(ii) \emph{Short context.} With $n\le 1024$ the log-log dynamic range
covers half a decade, which constrains the bucket-mean slope but
leaves higher-order curvature unconstrained; this is part of why
$\hat\xi_\Lambda$ and $\hat\xi_\beta$ retain a finite-$n$ gap.

\section{Acquisition protocol for the verification analysis on Qwen-7B-Chat}
\label{sec:verification-extras}

This appendix records the per-row attention acquisition pipeline
that backs the verification analysis of \cref{subsec:verification}
and the body figures it feeds.

\subsection{Acquisition protocol}
\label{subsec:verification-protocol}

We use Qwen-7B-Chat with dynamic-NTK RoPE enabled
(\cref{subsec:setup-qwen}). Attention rows are collected from all
$32$ layers and $32$ heads on PG19 and Proof-Pile-2 at context
lengths $\neval\in\{8192,16384,32768\}$, with four context-length
values per document at the ratios $n/\neval\in\{1/4,1/2,3/4,1\}$
and one query position per (layer, head, document, $n$) tuple. The
body's per-$\neval$ fit of \cref{subsubsec:verification-protocol} skips
$\neval=\ntrain=8192$ (training window, no extrapolation). The
upstream pass applies a length-dependent inverse-temperature
multiplier $\beta_n\propto(\log n)^{\xi_0}$ at the seven values
$\xi_0\in\{0,0.5,1,1.5,2,2.5,3\}$, where $\xi_0=0$ disables the
multiplier; $\xi_0$ perturbs the representation entering a layer
while the measured object is always the unrescaled current-layer
score $z_{n,j}=\langle\mathbf{q}_{n,i_n},\mathbf{k}_{n,j}\rangle/\sqrt{d_{qk}}$.
Across the three context lengths and four ratios, this yields eight
values $n\in\{2048,4096,6144,8192,12288,16384,24576,32768\}$.
The published-paper headline aggregation
(\cref{subsubsec:verification-protocol,tab:verif-xi-summary}) uses
the document subset $4$ from PG-19 and $5$ from Proof-Pile-2, so the
total attention-row count entering the headline statistics is
\(
   32\times 32 \times 3\,(\neval) \times 4\,(\text{ratios})
   \times 7\,(\xi_0) \times (4{+}5)\,(\text{docs})
   =774{,}144.
\)
The empirical contact gap is
the largest competitor gap $u$ satisfying
$\log N_n(u)/u\ge (1-\epsilon)\,\Lambda_n$ at $\epsilon=10^{-6}$,
a floating-point relaxation of the exact maximizer in
\cref{def:contact}.

\paragraph{Per-$\neval$ fit.}
The body $\hat\xi_{\rm pool}(\neval)$ is the OLS slope of the
count-weighted means of $\log\Lambda_n$ taken at each of the four
$n$ values within an $\neval$, regressed against
$\log\log n$. Sink-token attention rows with
$\log_{10}\Lambda_n>5$ are excluded from the bulk before
averaging. Aggregating across $\neval$ at shared $n$
mixes three dynamic-NTK RoPE rescale conditions (one per source
$\neval$) and biases the slope; per-$\neval$ fits avoid
this contamination.

\paragraph{Representative rows shown in body figures.}
The single attention rows recast as $N_n(t)$ in
\cref{fig:xi-coord-decomp} (middle, right) are selected
deterministically from the protocol above: for each contrast cell
$c$, take the row whose $(\log\alpha_n,\log\Delta_n)$ is closest to
the cell's bucket mean
$(\overline{\log\alpha}(c,n),\overline{\log\Delta}(c,n))$ in
$(\log\alpha_n,\log\Delta_n)$ at the largest $n\in\mathcal N(c)$. The
two cells used in body \cref{fig:xi-coord-decomp} are Qwen-7B-Chat
PG-19 at $\neval=32768$ ($n=32768$) and nanoGPT SlimPajama at
$\mathrm{iter}=100\mathrm{k}$ ($n=1024$); the contact statistics of
the selected rows are $(\hat\Lambda_n,\hat\alpha_n,\hat\Delta_n)
\approx(4.22,0.13,0.33)$ and $(1.64,0.26,1.09)$ respectively.


\end{document}